\newtheorem{theorem}{Theorem}
\newtheorem*{theorem*}{Theorem}
\newtheorem{corollary}{Corollary}
\newtheorem{definition}{Definition}
\newtheorem{proposition}{Proposition}
\newtheorem{remark}{Remark}
\def\real{\mbox{\rm I\kern-0.18em R}}
\title{\LARGE \bf Optimal Synthesis for Nonholonomic
Vehicles With Constrained Side Sensors}
\author{Paolo Salaris$^*$, Lucia
  Pallottino$^*$ and Antonio Bicchi$^*$ \thanks{This work was
    supported by E.C. contracts n.224428 CHAT, n.224053 CONET (Cooperating Objects Network of Excellence), n. 257462 HYCON2 (Network of Excellence) and n.2577649 Planet.}
  \thanks{$^*$ The Interdept. Research Center
    ``En\-ri\-co Pi\-ag\-gio'', U\-ni\-ver\-si\-ty of Pisa, via
    Diotisalvi 2, 56100 Pisa, Italy.  {\tt\footnotesize
      paolo.salaris,l.pallottino,bicchi@ing.unipi.it} } }
\begin{document}

\maketitle
\thispagestyle{empty}
\pagestyle{empty}

\begin{abstract}
We present a complete characterization of shortest paths to a goal position for a vehicle with unicycle kinematics and a limited range sensor, constantly keeping a given landmark in sight. Previous work on this subject studied the optimal paths in case of a frontal, symmetrically limited Field--Of--View (FOV). In this paper we provide a generalization to the case of arbitrary  FOVs, including the case that the direction of motion is not an axis of symmetry for the FOV, and even that it is not contained in the FOV. The provided solution is of particular relevance to applications using side-scanning, such as e.g. in underwater sonar-based surveying and navigation.

\end{abstract}


\section{Introduction}
In several mobile robot applications, a vehicle with nonholonomic kinematics of the unicycle type, equipped with a limited range sensor systems, has to reach a target while keeping some environment landmark in sight.
For example, in the Visual--Based control field the vehicle usually has an on-board monocular camera with limited Field--Of--View (FOV) and, subject to nonholonomic constraints on its motion, must move maintaining in sight one or more specified features of the environment. On the other hand, in the field of underwater surveying and navigation, a common task for Autonomous Underwater Vehicles (AUV) equipped with side sonar scanners is to detect and recognize objects (mines, wrecks or archeological find, etc.) on the sea bed (see e.g.~\cite{AUV,AUV2}). Side-scan sonar is a category of sonar systems that is used to efficiently create an image of large areas of the sea. 
Therefore, in order to recognize objects AUVs must move keeping them inside the limited range of the sensor.

Motivated by those application, in this paper we propose the study of optimal (shortest) paths for a nonholonomic vehicle moving in a plane to reach a target position while making so that a given landmark fixed in the plane is kept inside a planar cone moving with the robot.

The literature of optimal (shortest) paths stems mainly from the seminal work on unicycle vehicles with a bounded turning radius by Dubins~\cite{Dubins57}. Dubins has characterized the finite family of optimal paths for the particular vehicle while a complete optimal control synthesis for this problem has been reported in~\cite{bui94}. Later on, a similar problem with the car moving both forward and backward has been solved with different approaches in~\cite{ReedsShepp90}, \cite{SussmannTang91}. In particular, in~\cite{Soueres96} the optimal control synthesis for the Reeds$\&$Shepp car has been provided.
Minimum wheel rotation paths in for differential-drive robots have been considered in~\cite{LaValle09}. More recently, also the problem of determining minimum time trajectory has been taken into account in~\cite{Wang09}, \cite{Balkcom06} and \cite{Chyba01} for particular classes of robots, e.g. latter is on underwater robots. Finally, previous works on the same subject of this paper (\cite{SFPB-TRO09}, \cite{PSFB-CDC09}, \cite{HutchinsonBM07}) have studied the optimal paths in case of a vehicle with a limited on-board camera but only with a symmetric FOV with respect to the forward direction of the robot. In this paper, we present a more general synthesis of shortest paths in case of side sensor systems, like side sonar scanners on UAVs, where the forward direction is not necessarily included inside the sensor range modeled as a cone centered on the vehicle. The impracticability of paths that point straight to the feature lead to a more complex analysis of the reduction to a finite and sufficient family of optimal paths by excluding particular types of path.

In the rest of the paper, we provide a complete optimal synthesis for the problem, i.e., a finite language of optimal control words (at most 15 words, depending on orientation of the sensor with respect to the forward direction), and a global partition of the motion plane induced by shortest paths, such that a word in the optimal language is univocally associated to a region and completely describes the constrained shortest path from any starting point in that region to the goal point.

\section{Problem Definition}

Consider a vehicle moving on a plane where a right-handed reference
frame  $\langle W\rangle$ is defined with origin in $O_W$ and axes $X_w, Z_w$.
The configuration of the vehicle is described by $\xi(t)=(x(t),z(t),\theta(t))$, where $(x(t),z(t))$ is the position in
$\langle W\rangle$ of a reference point in the vehicle, and $\theta(t)$ is the vehicle heading with respect to the $X_w$ axis (see fig.~\ref{fig:VehicleCoordinates}). We assume that the dynamics of the vehicle are negligible, and that the forward and angular velocities, $\nu(t)$ and $\omega(t)$ respectively, are the control inputs to the kinematic model.
Choosing polar coordinates for the vehicle $\eta=[\rho\,\psi\,\beta]^T$ (see fig.~\ref{fig:VehicleCoordinates}),
the kinematic model of the unicycle-like robot is
\begin{equation}
\label{eq:PolarVehicle}
\begin{bmatrix}
\dot\rho \\
\dot\psi \\
\dot\beta
\end{bmatrix}=
\begin{bmatrix}
-\cos\beta & 0 \\
\frac{\sin\beta}{\rho} & 0 \\
\frac{\sin\beta}{\rho} & -1
\end{bmatrix}
\begin{bmatrix}
\nu \\
\omega
\end{bmatrix}.
\end{equation}
\begin{figure}[t]
\centering
\includegraphics[angle=90,width=0.7\columnwidth]{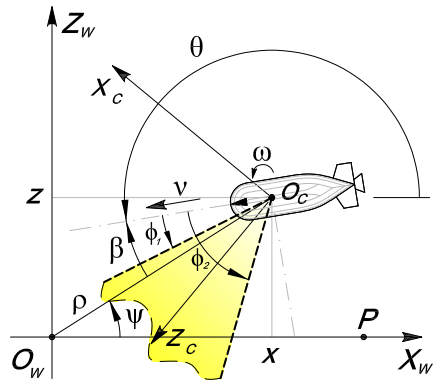}
\caption{Autonomous vehicle and systems coordinates. The vehicle's task is to
  reach $P$ while keeping $O_W$ within a limited sensor range modelled as a planar cone (highlighted in color).}
\label{fig:VehicleCoordinates}
\end{figure}
We consider vehicles with bounded velocities which can turn on the spot. In other words, we assume
\begin{equation}
\label{eq:AdmisControls}
(\nu, \omega) \in U,
\end{equation}
with $U$ a compact and convex subset of $\real^2$, containing the origin in its interior.

The vehicle is equipped with a rigidly fixed sensor system with a reference frame $\langle C\rangle =\{O_c, X_c, Y_c, Z_c\}$ such that the center $O_c$ corresponds to the robot's center $[x(t),z(t)]^T$ and the forward sensor axis $Z_c$ forms an angle $\Gamma$ w.r.t the robot's forward direction.
Moreover, let $\delta$ be the characteristic angle of the cone characterizing the limited Sensor Range (SR) and let us consider the most interesting problem in which $\delta\leq\pi/2$.
Without loss of generality, we will consider $0\leq \Gamma\leq\frac{\pi}{2}$, so that, when $\Gamma = 0$ the $Z_c$ axis is aligned with the robot's forward direction (i.e., the particular case solved in~\cite{SFPB-TRO09}), whereas, when $\Gamma = \frac{\pi}{2}$, is aligned with the axle direction.  Consider $\phi_1=\Gamma-\frac{\delta}{2}$ and $\phi_2=\Gamma+\frac{\delta}{2}$ the angles between the robot's forward direction and the right or left sensor's border w.r.t. $Z_c$ axis, respectively. The restriction on $0\leq\Gamma=\frac{\phi_1+\phi_2}{2}\leq\frac{\pi}{2}$ will be removed at the end of this paper, and an easy procedure to obtain the subdivision for any value of $\Gamma$ will be given.

Without loss of generality, we consider the position of the robot target point $P$ to lay on the $X_W$ axis, with coordinates $(\rho,\,\psi)=(\rho_P,\,0)$.  We also assume that the feature to be kept within the SR is placed on the axis through the origin $O_W$ and perpendicular to the plane of motion. We consider a planar SR with characteristic angle $\delta=|\phi_2-\phi_1|$, which generates the constraints
\begin{eqnarray}
\beta-\phi_1\geq0\,,  \label{eq:S_R} \\
\beta-\phi_2\leq0\,. \label{eq:S_L}
\end{eqnarray}

Note that we place no restrictions on the vertical dimension of the sensor. Therefore, the height of the feature on the motion plane, which corresponds to its $Y_c$ coordinate in the sensor frame $\langle C \rangle$, is irrelevant to our problem. Hence, for our purposes, it is necessary to know only the projection of the feature on the motion plane, i.e., $O_W$.

The goal of this paper is to determine, for any point $Q\in\real^2$ in the robot space, the shortest path from $Q$ to $P$ such that the feature is maintained in the SR. In other words, we want to minimize the length of the path covered by the center of the vehicle under the {\em feasibility constraints}~\eqref{eq:PolarVehicle}, \eqref{eq:AdmisControls}, \eqref{eq:S_R}, and \eqref{eq:S_L}.

From the theory of optimal control with state and control constraints (see~\cite{Bryson}) it is possible to show that, when constraints~\eqref{eq:S_R} and~\eqref{eq:S_L} are not active, extremals curves, i.e., curves that satisfy necessary conditions for optimality, are straight lines (denoted by symbol $S$) and rotation on the spot (denoted by symbol $*$). On the other hand, when constraints~\eqref{eq:S_R} and~\eqref{eq:S_L} are active, the corresponding extremal maneuvers are two logarithmic spirals with characteristic angles $\phi_1$ and $\phi_2$ denoted by $T_1$ and $T_2$, respectively (see~\cite{SFPB-TRO09} for details).

Logarithmic spiral $T$ with characteristic angle $\phi>0$ ($\phi<0$) rotates counterclockwise (clockwise) around the feature. We refer to counterclockwise and clockwise spirals as \emph{Left} and \emph{Right}, and by symbols $T^L$ and $T^R$, respectively.  The adjectives ``left'' and ``right'' indicate the half--plane where the spiral starts for an on--board observer aiming at the feature.

Notice that, for $\phi_2=\pi/2$ the left sensor border is aligned with the axle direction and the spiral $T_2$ becomes a circle  centered in $O_W$ (denoted by $C$), whereas for $\phi_1=0$ the right sensor border is aligned with the direction of motion and $T_1$ becomes an half line through $O_W$ (denoted by $H$).
%

Extremal arcs can be executed by the vehicle in either forward or backward direction: we will hence use superscripts $+$ and $-$ to make this explicit (e.g., $S^-$ stands for a straight line executed backward).

We will build extremal paths consisting of sequences of symbols, or {\em words}, in the alphabet $\mathcal{A}=\{*,\,S^+,\,S^-,\,E_1^+,\,E_1^-,\,E_2^+,\,E_2^-\}$, where the actual meaning of symbols depends on angles $\Gamma$ and $\delta$ as in fig.~\ref{fig:AllCases}. Rotations on the spot ($*$) have zero length, but may be used to properly connect other maneuvers.


\begin{figure}[t!]
\centering
\subfigure[Frontal: $0 \leq \Gamma < \frac{\delta}{2}$, $E_1=T_1^L,\,E_2=T_2^R$.]{\label{fig:AF}\includegraphics[width=0.2\textwidth]{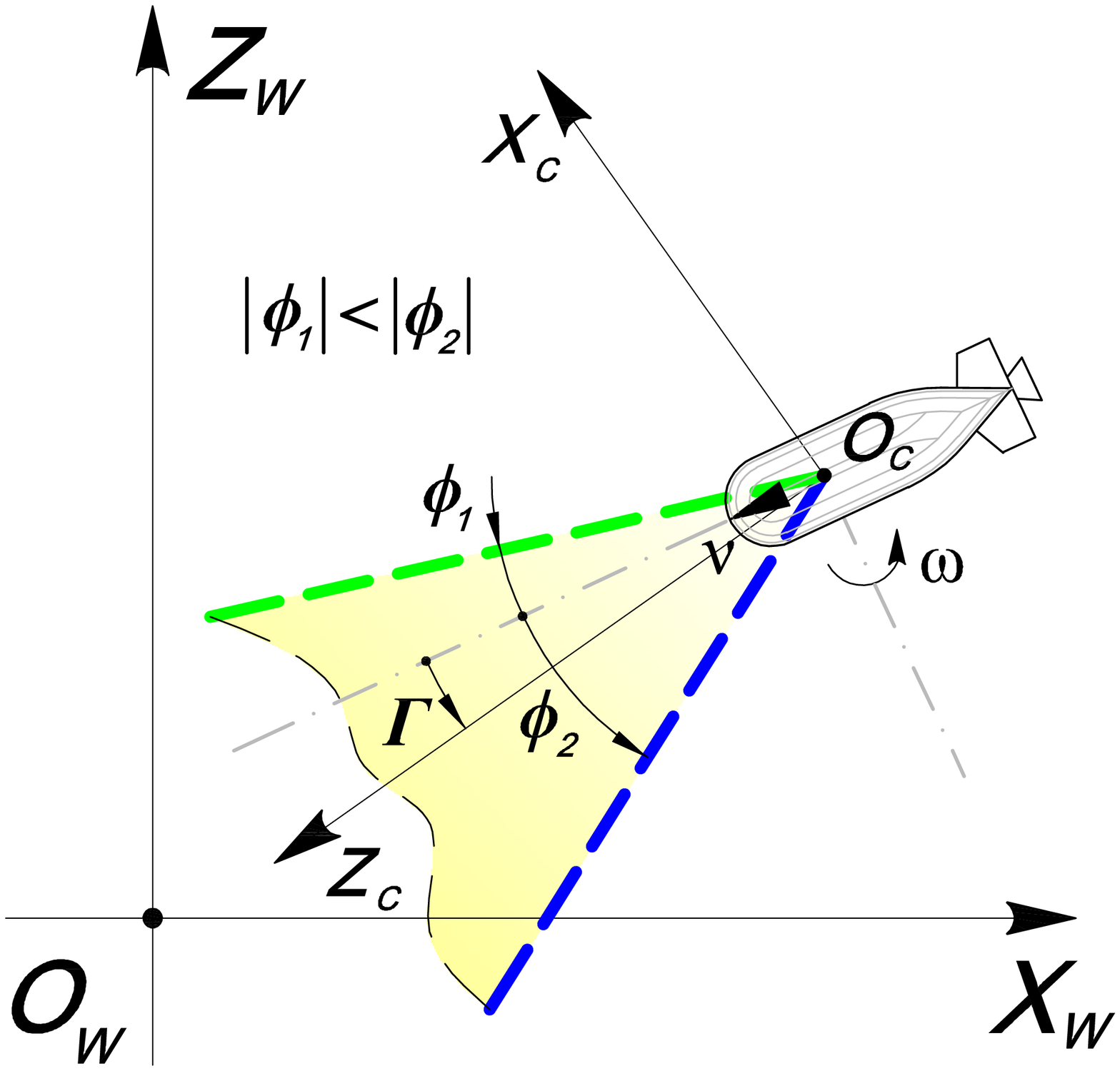}}
\quad
\subfigure[Borderline Frontal: $\Gamma=\frac{\delta}{2}$, $E_1=H,\,E_2=T_2^R$.]{\label{fig:BF}\includegraphics[width=0.2\textwidth]{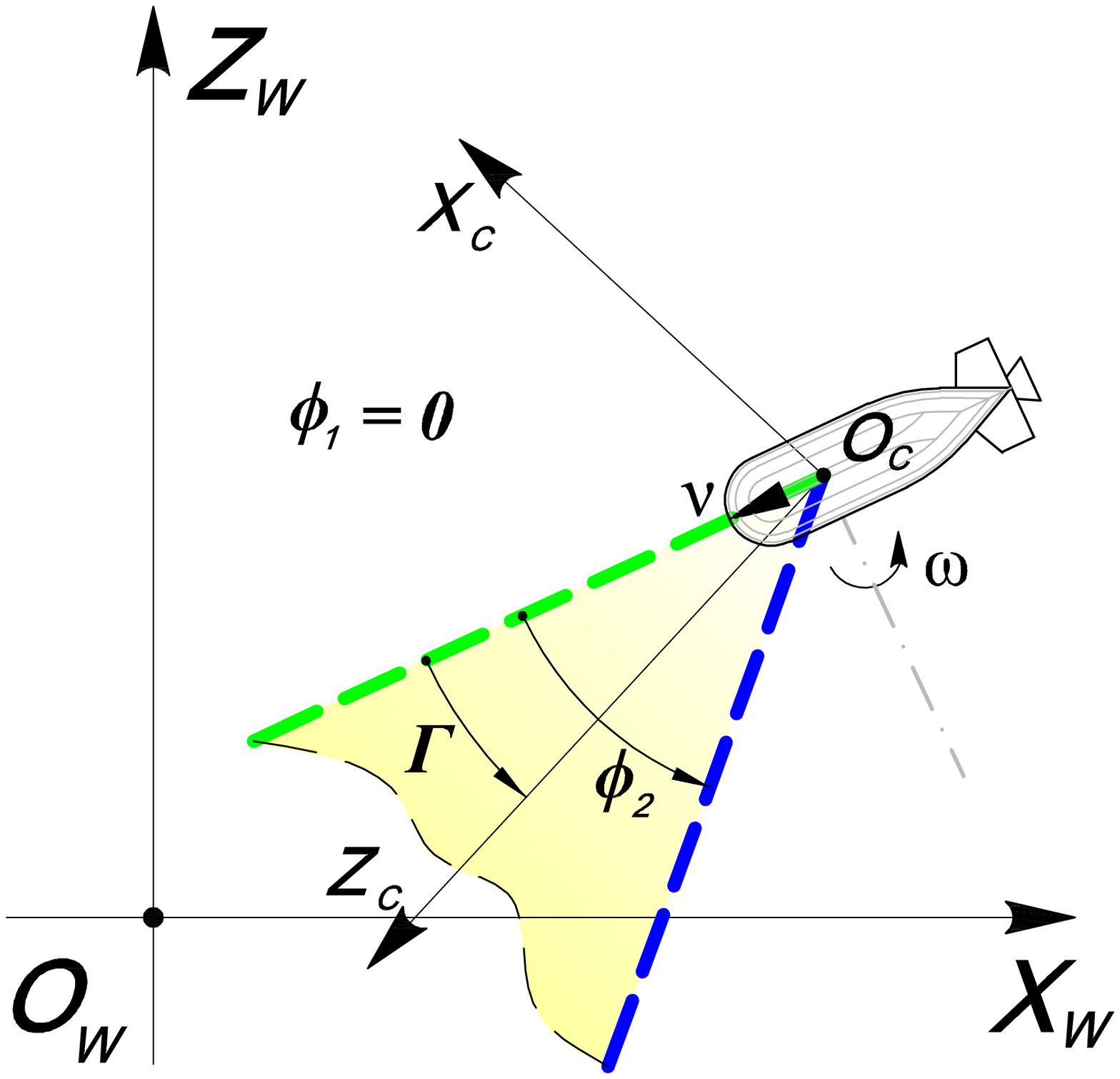}}
\quad
\subfigure[Side: $\frac{\delta}{2}<\Gamma<\frac{\pi-\delta}{2}$, $E_1=T_1^R,\,E_2=T_2^R$.]{\label{fig:Side}\includegraphics[width=0.2\textwidth]{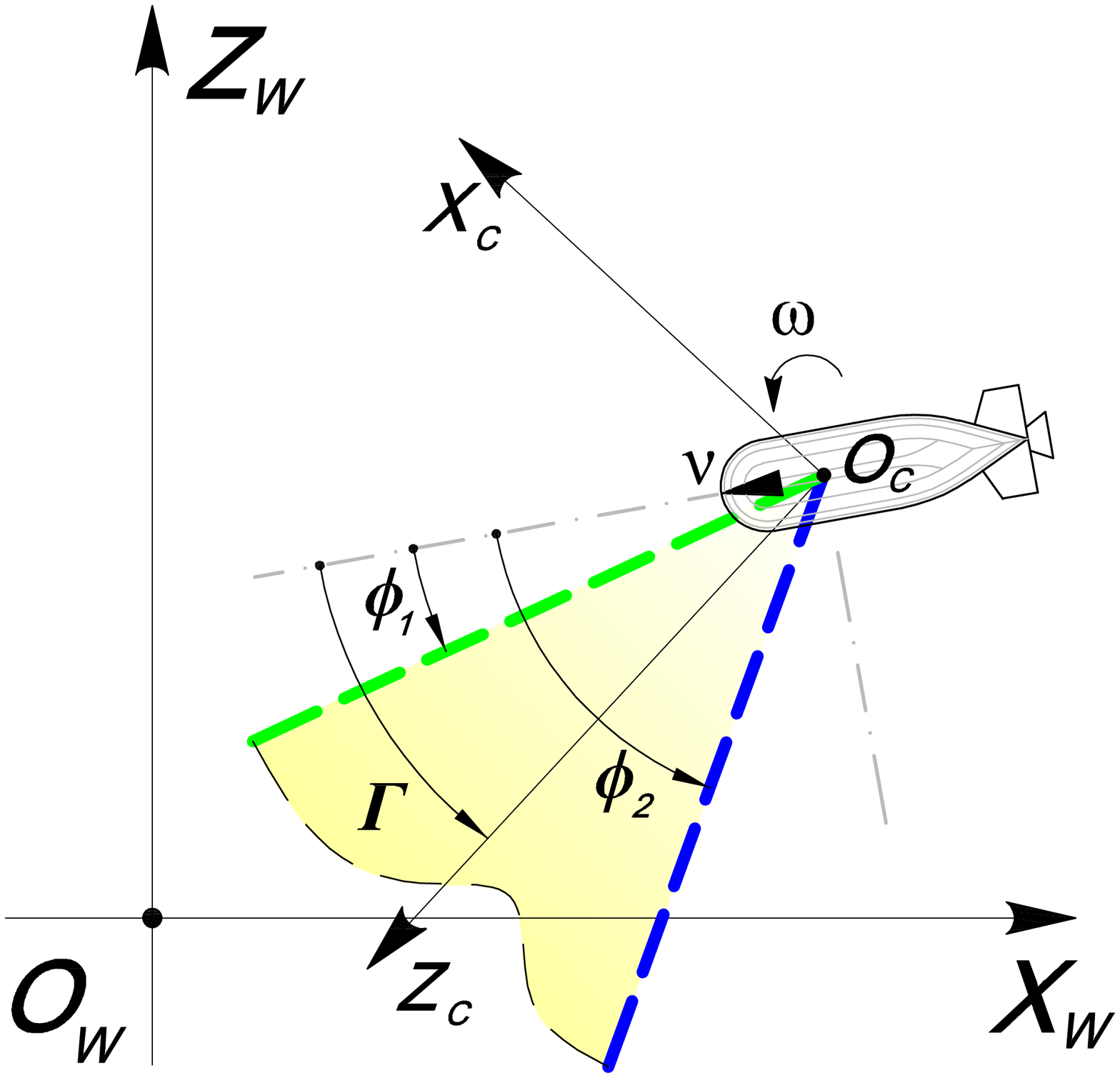}}
\quad
\subfigure[Borderline Side: $\Gamma=\frac{\pi-\delta}{2}$, $E_1=T_1^R,\,E_2=C$.]{\label{fig:BSide}\includegraphics[width=0.2\textwidth]{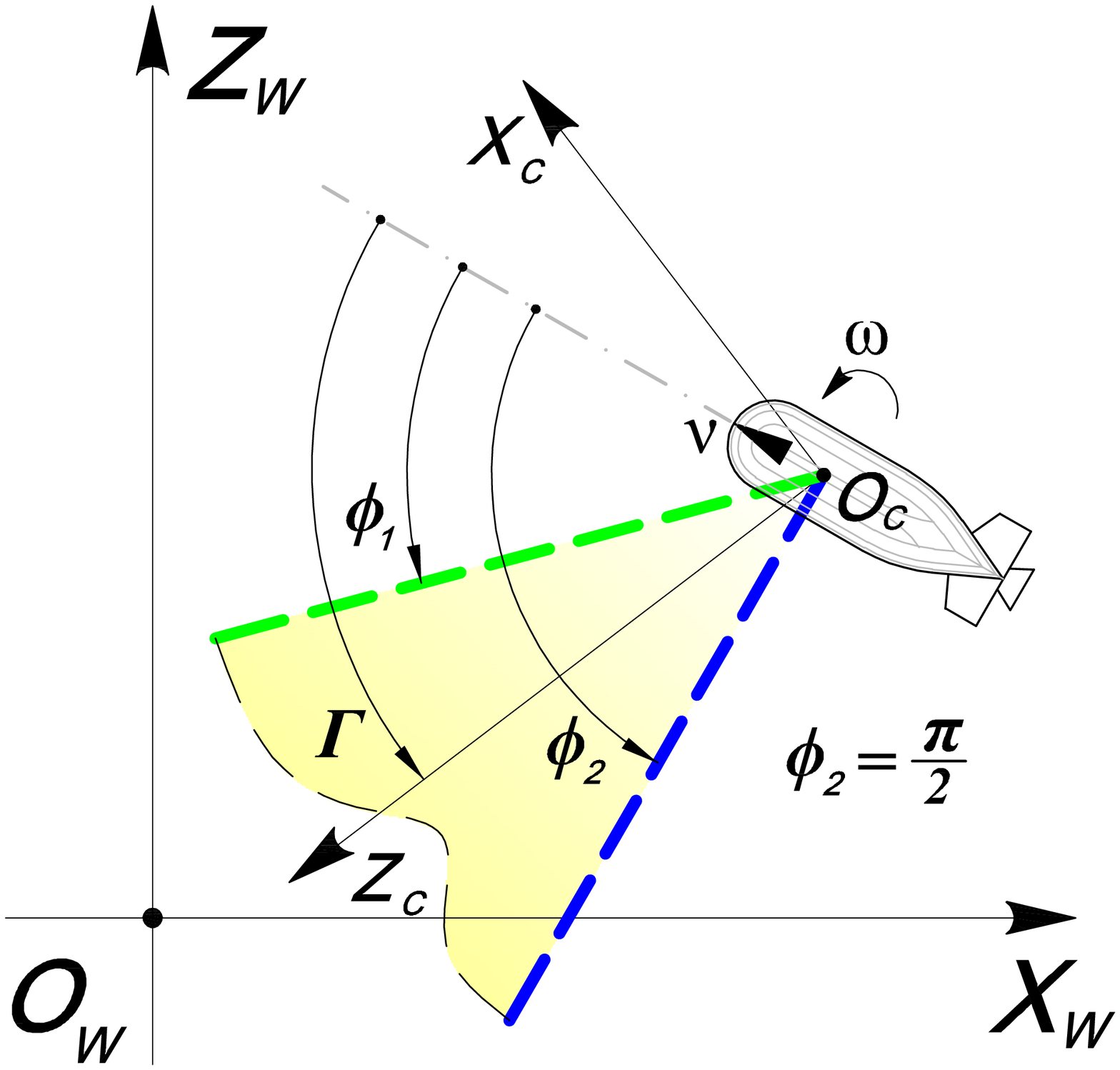}}
\quad
\subfigure[Lateral: $\frac{\pi-\delta}{2}<\Gamma<\frac{\pi}{2}$, $E_1=T_1^R,\,E_2=T_2^L$.]{\label{fig:AL}\includegraphics[width=0.2\textwidth]{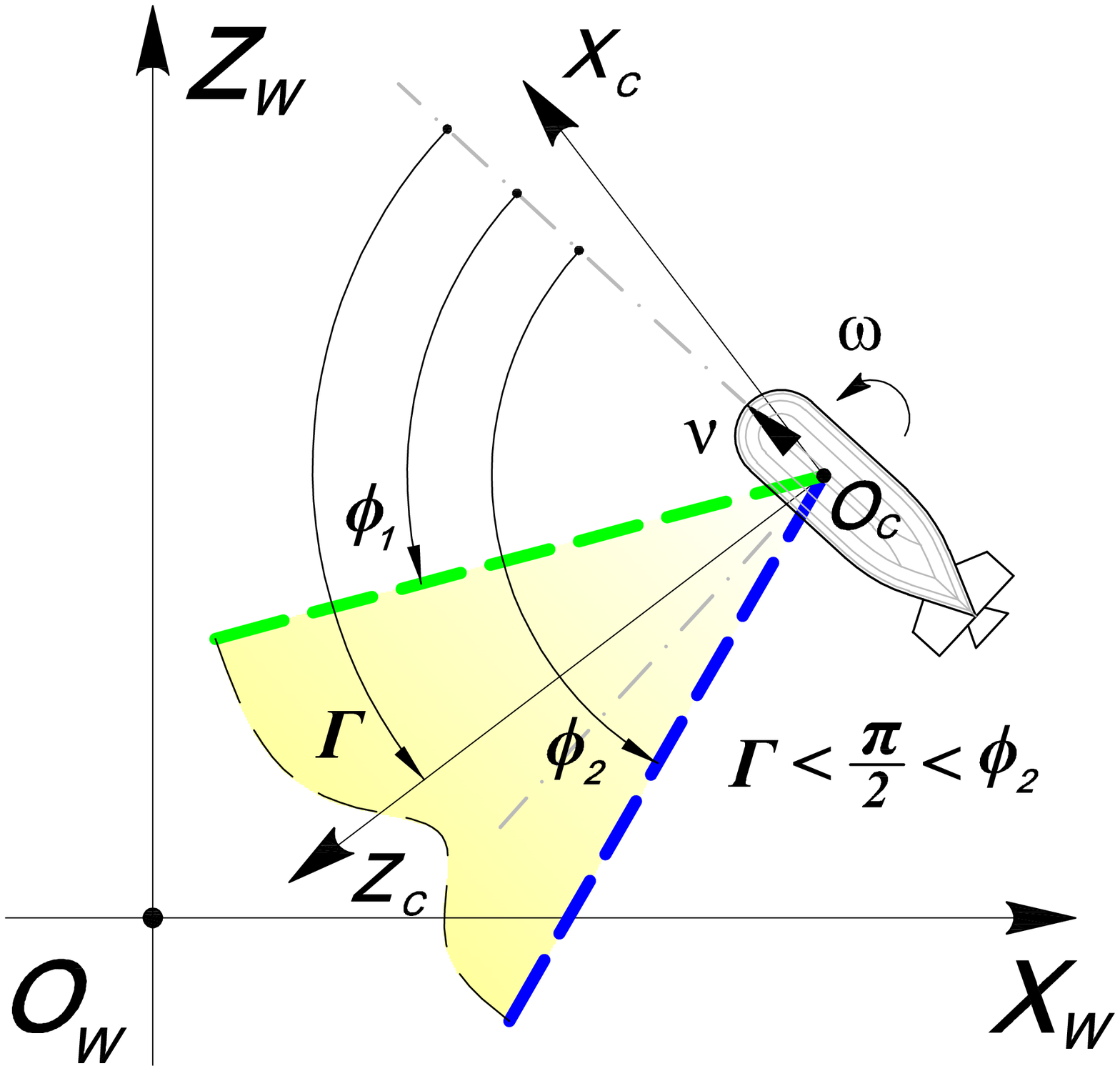}}
\quad
\subfigure[Symmetric Lateral: $\Gamma = \frac{\pi}{2}$, $E_1=T_1^R,\,E_2=T_2^L$.]{\label{fig:SL}\includegraphics[width=0.2\textwidth]{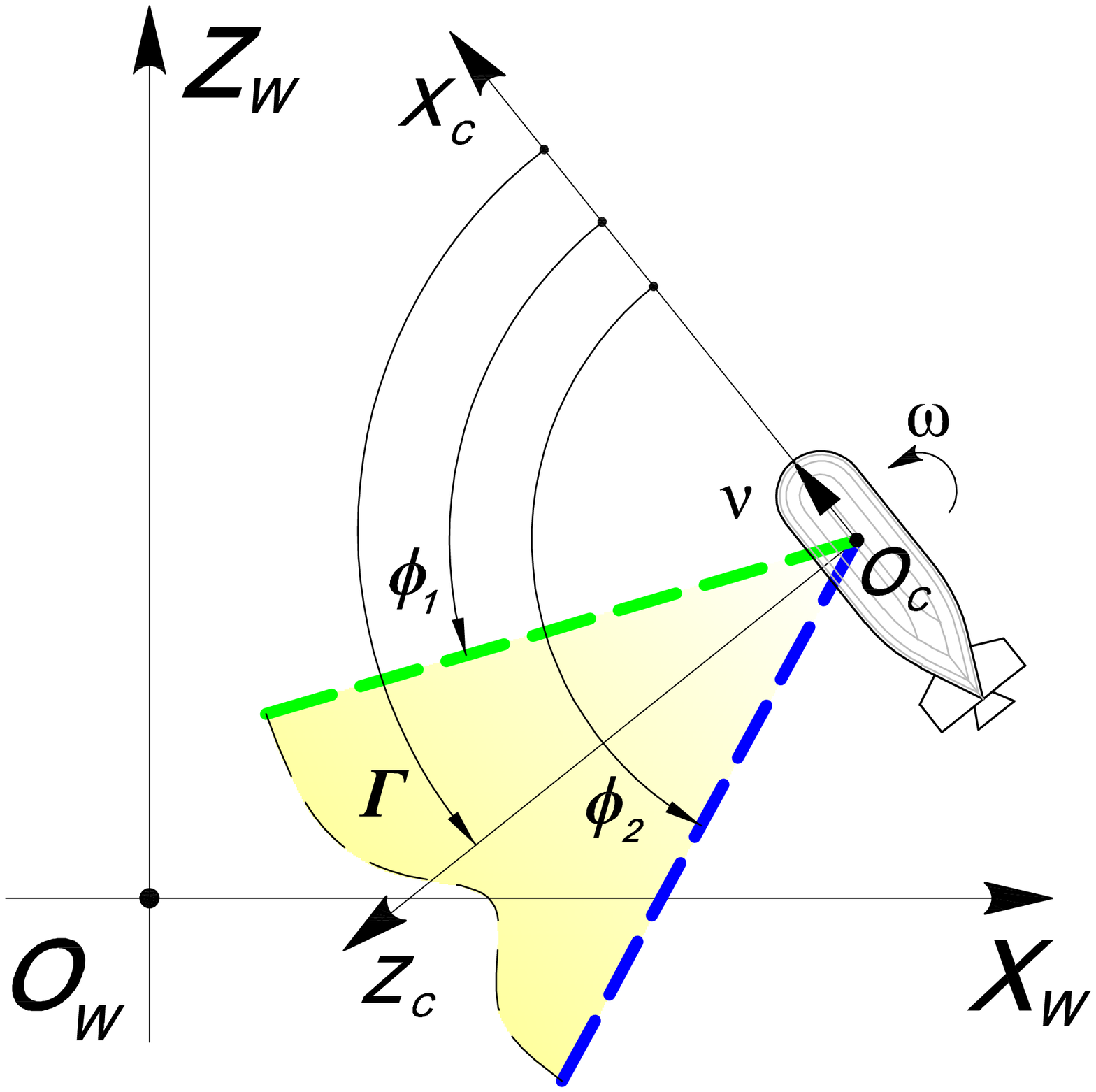}}
\caption{Sensor configuration depending on angles $\Gamma$ and $\delta$.}
\label{fig:AllCases}
\end{figure}


Let $\mathcal{L}_{\Gamma}$ be the set of possible words generated by the aforementioned symbols in $\mathcal{A}$ for each value of $\Gamma$. The rest of the paper is dedicated to showing that, due to the physical and geometrical constraints of the considered problem, a sufficient optimal finite language $\mathcal{L}_O\subset\mathcal{L}_{\Gamma}$ can be built such that, for any initial condition, it contains a word describing a path to the goal which is no longer than any other feasible path. Correspondingly, a partition of the plane in a finite number of regions is described, for which the shortest path is one of the words in $\mathcal{L}_O$.

\section{Shortest path synthesis}
\label{sec:OptimalPaths}

In this section, we introduce the basic tools that will allow us to study the optimal synthesis of the whole state space of the robot, beginning from points
on a particular sub--set of $\real^2$ such that the optimal paths are in a sufficient optimal finite language.


\begin{definition}
Given the target point $P = (\rho_P,\,0)$ in polar coordinates, and $Q\in\real^2\setminus O_W$, $Q = (\rho_Q,\psi_Q)$ with $\rho_Q\neq 0$, let
$f_Q:\real^2\rightarrow\real^2$ denotes the map
\begin{equation}\label{eq:f}
f_Q\left(\rho_G,\psi_G \right)= \left\{
\begin{aligned}
&\left(\frac{\rho_G\rho_P}{\rho_Q},\psi_G-\psi_Q\right) & \hbox{for } \rho_G\neq 0\\
&\left(0,0\right) & \hbox{otherwise.} \\
\end{aligned} \right.
\end{equation}
\end{definition}

The map $f_Q$ is the combination of a clockwise rotation by angle $\psi_G-\psi_Q$, and a scaling by a factor $\rho_P/\rho_Q$ that maps $Q$ in $P$.

\begin{remark}
The alphabet $\mathcal{A}$ is invariant w.r.t. rotation and scaling. However, it is not invariant w.r.t. axial symmetry, as it happened in the particular case (i.e., the Frontal case with $\Gamma=0$) considered in~\cite{SFPB-TRO09}, where the map $f_Q$ was defined as a combination of rotation, scaling and axial symmetry. For example, logarithmic spirals are self-similar and self-congruent (under scaling and rotation they are mapped into themselves). On the other hand, left (right) spirals are mapped into right (left) spirals through an axial symmetry and alphabet invariancy can be lost. Indeed, for example, considering the Side case alphabet (see fig.~\ref{fig:Side})  $\mathcal{A}_{Side}=\{*,\,S^+,\,S^-,\,T_1^{R+},\,T_1^{R-},\,T_2^{R+},\,T_2^{R-}\}$, and applying an axial symmetry we have $T_1^R\rightarrow T_1^L\notin\mathcal{A}_{Side}$, the same occurs for the Frontal alphabet with $\Gamma>0$.
\end{remark}

Let $\gamma$ be a path parameterized by $t\in[0,1]$ in the plane of motion
$\gamma(t) = (\rho(t),\,\psi(t))$.  Denote with ${\mathcal P}_Q$
the set of all feasible extremal  paths from $\gamma(0)=Q$ to $\gamma(1)=P$.
\begin{definition}
Given the target point $P=(\rho_P,\,0)$ and $Q=(\rho_Q,\psi_Q)$ with $\rho_Q\neq 0$, let the {\emph {path transform}} function $F_Q$
be defined as
\begin{equation}\label{eq:F}
\begin{aligned}
F_Q:&\,\,{\mathcal P}_Q\rightarrow {\mathcal P}_{f_Q(P)}\\
&\,\,\gamma(t)\mapsto f_Q(\gamma(1-t)),\;\forall t\in I.
\end{aligned}
\end{equation}
\end{definition}
Notice that $\tilde\gamma(t)=F_Q \left( \gamma(1-t) \right)$ corresponds to
$\gamma(t)$ transformed by $f_Q$ and followed in opposite direction. Indeed,
$\tilde\gamma$ is a path from $\tilde \gamma(0)=f_Q(P)=\left(\frac{\rho_P^2}{\rho_Q},\,-\psi_Q\right)$ to
$\tilde\gamma(1)=f_Q(Q)\equiv P$.

The $F$ map has some properties that make it very useful to the study of our problem in a way which is to some extent similar to what described (for a different $F$ map) in~\cite{SFPB-TRO09}. In particular, the locus of points $Q$ such that $f_Q(P) = Q$, is the circle  with center in $O_W$ and radius $\rho_P$. We will denote this circle  by $C(P)$ and the closed disk within $C(P)$ by $D(P)$.

$C(P)$ has an important role in the proposed approach since properties of $F_Q$ will allow us to solve the synthesis problem from points on $C(P)$, and hence to extend the synthesis to $D(P)$ and to the whole motion plane. Indeed, $\forall Q \in C(P) $ and $\forall \gamma\in{\mathcal P}_Q$, $F_Q(\gamma)\in {\mathcal P}_{f_Q(P)}$ with $f_Q(P)\in C(P)$, i.e., a path from a point on $C(P)$ to $P$ is mapped in a path from $C(P)$ to $P$.

Furthermore, $F_Q$ transforms an extremal in ${\mathcal A}$ in itself but followed in opposite direction. Hence, $F_Q$ maps extremal paths in $\mathcal{L}_\Gamma$ in extremal paths in $\mathcal{L}_\Gamma$. For example, let $w=S^-*H^-*S^+*T_2^{R+}$ be the word that characterize a path from $Q$ to $P$, the transformed path is of type  $z=T_2^{R-}*S^-*H^{+}*S^+$. With a slight abuse of notation, we will write $z=F_Q(w)$.

\begin{proposition}\label{prop:length}
Given $Q\in\real^2$ and a path $\gamma\in{\mathcal P}_Q$ of length $l$, the
length of the transformed path $\tilde\gamma=F_Q(\gamma)$ is
$\tilde l=\frac{\rho_P}{\rho_Q} l$.
\end{proposition}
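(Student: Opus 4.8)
The plan is to recognize that the map $f_Q$ is a similarity transformation of the plane and that the time reversal $t\mapsto 1-t$ leaves arc length unchanged; the scaling ratio of the similarity then yields the claimed factor directly.

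First I would write the length of a planar path in polar coordinates as
$$l=\int_0^1\sqrt{\dot\rho(t)^2+\rho(t)^2\dot\psi(t)^2}\,dt,$$
and record the explicit form of the transformed path. Writing $\gamma(s)=(\rho(s),\psi(s))$ and setting $s=1-t$, definitions~\eqref{eq:f} and~\eqref{eq:F} give $\tilde\gamma(t)=\left(\frac{\rho_P}{\rho_Q}\rho(1-t),\,\psi(1-t)-\psi_Q\right)$, so that $\tilde\rho(t)=\frac{\rho_P}{\rho_Q}\rho(1-t)$ and $\tilde\psi(t)=\psi(1-t)-\psi_Q$, whence $\dot{\tilde\rho}(t)=-\frac{\rho_P}{\rho_Q}\dot\rho(1-t)$ and $\dot{\tilde\psi}(t)=-\dot\psi(1-t)$.

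The key observation is that $f_Q$ is the composition of a rotation by the constant angle $-\psi_Q$ about $O_W$ with a homothety of ratio $\rho_P/\rho_Q$ centered at $O_W$: the rotation is an isometry and the homothety multiplies every length by $\rho_P/\rho_Q$, so $f_Q$ is a similarity of ratio $\rho_P/\rho_Q$. Substituting the derivatives above into the length functional, the common factor $\rho_P/\rho_Q$ can be pulled out of the square root because it multiplies the $\dot\rho^2$ term and the $\rho^2\dot\psi^2$ term by the same amount. The change of variable $s=1-t$, whose Jacobian has absolute value one, then absorbs the time reversal and shows $\tilde l=\frac{\rho_P}{\rho_Q}\,l$.

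I expect the only delicate point to be the uniform factoring of the scaling out of the polar arc-length element: one must verify that both contributions scale by exactly $\left(\rho_P/\rho_Q\right)^2$, which is precisely the statement that a homothety acts as a pure dilation on the Euclidean metric. I would also note that the degenerate branch $\rho_G=0$ of $f_Q$ is immaterial here, since a feasible path keeping $O_W$ within the sensor range stays at positive distance from $O_W$ (or meets it only at isolated instants of measure zero), so the integral is taken over the regular branch throughout.
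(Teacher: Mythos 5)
Your proof is correct and matches the substance of the paper's (omitted) argument: the paper gives no in-text derivation, deferring to a similar result in \cite{SFPB-TRO09}, and what that result amounts to is exactly your computation --- $f_Q$ is a similarity of ratio $\rho_P/\rho_Q$ (rotation by $-\psi_Q$, an isometry, composed with a homothety centered at $O_W$), so both terms under the square root in the polar arc-length element $\sqrt{\dot\rho^2+\rho^2\dot\psi^2}$ acquire the same factor $(\rho_P/\rho_Q)^2$, while the reversal $s=1-t$ has unit Jacobian and leaves the integral unchanged. Your closing caveat is also apt: the degenerate branch $\rho_G=0$ of $f_Q$ plays no role, since a feasible path meets $O_W$ at most at isolated parameter values (e.g.\ the $E_1^+ * E_1^-$ paths through $O_W$), which do not affect the length integral.
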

The proof is easily obtained from a similar result in~\cite{SFPB-TRO09}.

Based on the properties of $F_Q$, optimal paths from points on $C(P)$ completely evolve inside $C(P)$. To prove this statement we first report the following result,
\begin{figure}[t]
\centering
\includegraphics[angle=90,width=0.85\columnwidth]{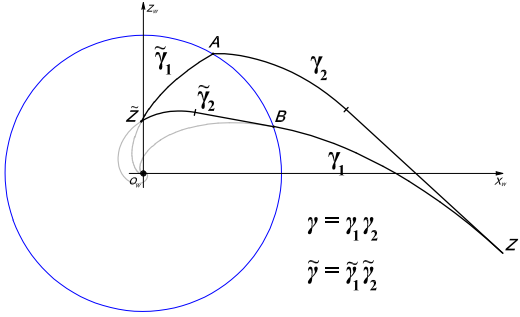}
\caption{An example for theorem~\ref{theo:InNOTOut}: path $\gamma=\gamma_1\gamma_2$ ($\gamma_1$ followed by $\gamma_2$) of type $T_2^{R-}S^-*T_1^{R+}$ from $A$ to $B$ is shortened by a path $\tilde\gamma=\tilde\gamma_1\tilde\gamma_2$ of type $T_1^{R+}*T_1^{R+}S^-$ by applying path transformation $F_{Z}$ to path $\gamma$.}
\label{fig:teorema1}
\end{figure}
\begin{theorem}
\label{theo:InNOTOut}
Given two points $A=(\rho_A,\,\psi_A)$ and $B=(\rho_B,\,\psi_B)$, with $\psi_A>\psi_B$ and $\rho=\rho_A=\rho_B$, and an extremal path $\gamma$ from $A$ to $B$ such that for each point $G$ of $\gamma$, $\rho_G>\rho$, there exists an extremal path $\tilde\gamma$ from $A$ to $B$ such that for each point $\tilde G$ of $\tilde\gamma$, $\rho_{\tilde G}<\rho$ and $\ell(\tilde\gamma)<\ell(\gamma)$ (see fig.~\ref{fig:teorema1}).
\end{theorem}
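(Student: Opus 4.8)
The plan is to prove the claim constructively with the path transform $F_Z$ and the length identity of Proposition~\ref{prop:length}. Since extremals and the alphabet are invariant under rotation and scaling (cf.\ the Remark), I would first normalize so that the circle of radius $\rho$ carrying $A$ and $B$ is exactly $C(P)$, i.e.\ $\rho=\rho_P$, and rotate so that one endpoint sits on the positive $X_W$ axis; this lets me read subarcs of $\gamma$ as elements of some ${\mathcal P}_{(\cdot)}$ and apply $F$ to them. The decisive property is that $f_Z$ is an \emph{orientation--preserving} similarity: it sends each extremal arc to an extremal arc of the \emph{same} type and the \emph{same} handedness, so the transformed word stays inside $\mathcal{L}_\Gamma$ (this is precisely why $F_Z$, rather than the axial symmetry used in the $\Gamma=0$ case, must be employed -- see the Remark), it multiplies every radius by the factor $\rho_P/\rho_Z$, and by Proposition~\ref{prop:length} it multiplies length by the same factor.

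The construction then folds the exterior excursion of $\gamma$ into the disk $D(P)$. I would choose the pivot $Z$ to be a distinguished point of $\gamma$ lying strictly outside $C(P)$ (for instance its point of maximal radius $\rho_Z>\rho$, or the point at which $\gamma$ meets the spiral through an endpoint), split $\gamma$ at $Z$, and apply $F_Z$ to the subarc so that its image runs toward the relabelled target. Because $\rho_Z>\rho=\rho_P$, the similarity ratio $\rho_P/\rho_Z$ is strictly less than $1$: every point of radius $r$ is sent to radius $r\rho_P/\rho_Z$, so the transformed subarc lies strictly inside $C(P)$ (it can meet $C(P)$ only at the image of $Z$), and by Proposition~\ref{prop:length} its length is strictly smaller than that of the original subarc. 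One then reattaches this interior arc to $A$ and to $B$ by interior extremal arcs joined through zero--length on--the--spot rotations $*$ (which are free), obtaining an extremal path $\tilde\gamma$ from $A$ to $B$ contained in $D(P)$, as in Fig.~\ref{fig:teorema1}.

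The strictness of $\ell(\tilde\gamma)<\ell(\gamma)$ I would get from the basic fact that along any spiral extremal the arclength subtended over a fixed angular interval grows in proportion to the local radius; hence the portion of $\gamma$ that lives outside $C(P)$ is strictly \emph{more} expensive than its folded--in image, and the contraction by $\rho_P/\rho_Z<1$ outweighs the cost of the short interior connectors, whose radial extent is itself damped by the same factor. The strict gain is guaranteed by the hypothesis that $\gamma$ spends positive length with $\rho_G>\rho$, while $\tilde\gamma$ never does.

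The hard part is the reconnection: one must choose $Z$ and the connecting arcs so that three requirements hold \emph{simultaneously} -- that $\tilde\gamma$ remain entirely inside $C(P)$, that it stay a genuine feasible extremal word of $\mathcal{L}_\Gamma$ (correct spiral handedness and compliance with the sensor constraints~\eqref{eq:S_R}--\eqref{eq:S_L}), and that the length saved by the contraction not be consumed by the connectors. Obtaining a \emph{net} strict decrease, rather than merely shortening the transformed subarc, is the crux; I expect to handle it by locating $Z$ at the geometrically distinguished point of $\gamma$ where its two transformed subarcs abut a common interior point, so that the endpoints of $F_Z$'s image align with $A$ and $B$ up to a single free rotation $*$ and no wasteful connector length is incurred.
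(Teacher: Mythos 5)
Your machinery is the right one --- the paper's own proof is exactly a split-at-the-maximum-radius-point contraction --- but you leave the crux as a hope rather than a proof, and your fallback would actually fail. The decisive fact, which your last paragraph gestures at (``two transformed subarcs abut a common interior point'') but never establishes, is an elementary polar-coordinate computation: take $Z=(\rho_Z,\psi_Z)$ with $\rho_Z=\max_{G\in\gamma}\rho_G$, split $\gamma$ at $Z$ into $\gamma_1$ (from $Z$ to $B$) and $\gamma_2$ (from $Z$ to $A$), and contract \emph{both} by the common factor $\rho/\rho_Z<1$, using two \emph{different} rotations, one sending $Z\mapsto A$ and the other $Z\mapsto B$. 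Then the image of $B$ under the first map is $\bigl(\rho^2/\rho_Z,\ \psi_B+\psi_A-\psi_Z\bigr)$ and the image of $A$ under the second is $\bigl(\rho^2/\rho_Z,\ \psi_A+\psi_B-\psi_Z\bigr)$ --- the \emph{same} point $\tilde Z$. So the two contracted arcs concatenate exactly (up to a zero-length $*$), no connecting arcs are needed, $\ell(\tilde\gamma)=(\rho/\rho_Z)\,\ell(\gamma)<\ell(\gamma)$ is immediate, and every interior point lands at radius $\rho\rho_G/\rho_Z<\rho$. Note also that this uses plain rotation-plus-scaling with two distinct rotation angles, not a single application of the map $F_Z$ of~\eqref{eq:F}, which is defined on paths ending at $P$ and carries a fixed rotation; your framing of the construction as ``apply $F_Z$ to the subarc'' does not literally parse.

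The gap matters because your fallback --- reattaching the contracted arc to $A$ and $B$ ``by interior extremal arcs'' and arguing the contraction ``outweighs the cost of the short interior connectors'' --- cannot be repaired into a proof of the strict inequality. The saving from the contraction is $\bigl(1-\rho/\rho_Z\bigr)\ell(\gamma)$, which tends to $0$ as $\rho_Z\to\rho^+$; any connector of positive length (and nothing in your argument bounds connector length by the saving) then destroys $\ell(\tilde\gamma)<\ell(\gamma)$, so the claimed ``strict gain guaranteed by the hypothesis'' does not follow. Likewise your appeal to spiral arclength growing with radius is not needed and does not substitute for the endpoint-matching computation: once $\tilde Z$ is identified, strictness is pure scaling via Proposition~\ref{prop:length}-type reasoning. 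To close the proof, replace the connector discussion with the two-line verification that both contracted subarcs terminate at $\tilde Z=\bigl(\rho^2/\rho_Z,\ \psi_A+\psi_B-\psi_Z\bigr)$; everything else in your proposal (invariance of the alphabet under orientation-preserving similarities, interiority of the image, choice of $Z$ as the maximum-radius point) is sound and matches the paper.
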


The proof of this theorem can be found in section~\ref{proof:Th1} in the Appendix.

\begin{figure*}[th!]
\centering
\includegraphics[width=1.2\columnwidth]{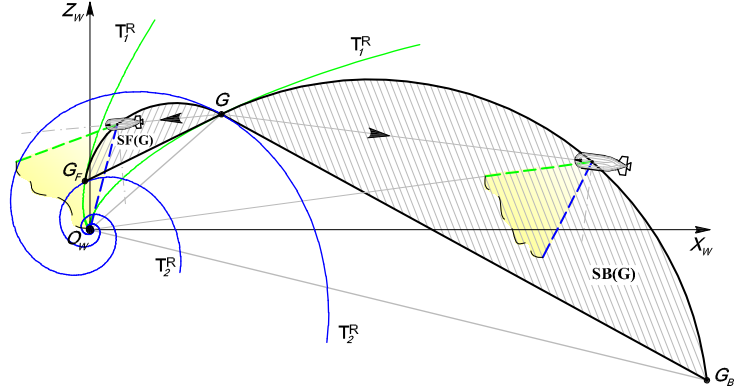}
\caption{Forward and backward straight path regions from $G$ for $\frac{\delta}{2}<\Gamma\leq \frac{\pi-\delta}{2}$.}
\label{fig:SecondCaseEye}
\end{figure*}
An important but straightforward consequence of the theorem is the following
\begin{corollary}
\label{coroll:InNOTOut}
For any path in $\mathcal{P}_Q$ with $Q \in C(P)$ there exists a shorter or equal-length path in $\mathcal{P}_Q$ that completely evolves in $D(P)$.
\end{corollary}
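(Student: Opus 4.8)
The plan is to reduce the corollary to Theorem~\ref{theo:InNOTOut} by decomposing an arbitrary extremal path $\gamma\in\mathcal{P}_Q$ (with $Q\in C(P)$, so that $\rho_Q=\rho_P$ and both $Q$ and $P$ lie on $C(P)$) into the portions that already evolve inside the disk $D(P)$ and the portions that escape outside it, and then replacing each escaping portion by a strictly shorter one lying inside, supplied by the theorem. First I would observe that, being extremal, $\gamma$ is a finite concatenation of arcs in the alphabet $\mathcal{A}$; each straight segment meets the centered circle $C(P)$ in at most two points, each logarithmic spiral (which is monotone in $\rho$ about $O_W$) in at most one, and rotations on the spot do not move in $(\rho,\psi)$, so $\gamma$ crosses $C(P)$ only finitely many times. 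Consequently $\gamma$ splits into finitely many maximal sub-arcs, each of which lies entirely in $D(P)$ (i.e. $\rho\leq\rho_P$) or entirely outside $D(P)$ (i.e. $\rho>\rho_P$ on its interior).

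The sub-arcs of the first kind I would leave untouched. Every sub-arc $\gamma_i$ of the second kind has both endpoints $A_i,B_i$ on $C(P)$, hence $\rho_{A_i}=\rho_{B_i}=\rho_P$, while all its interior points satisfy $\rho>\rho_P$: this is exactly the configuration required by Theorem~\ref{theo:InNOTOut}. For each such $\gamma_i$ I would then invoke the theorem to obtain an extremal arc $\tilde\gamma_i$ between the same endpoints, lying strictly inside $D(P)$ and with $\ell(\tilde\gamma_i)<\ell(\gamma_i)$. The theorem is stated under the orientation hypothesis $\psi_{A_i}>\psi_{B_i}$; when an excursion is traversed with $\psi_{A_i}<\psi_{B_i}$ I would apply the theorem to the reversed sub-arc (same interior, endpoints on $C(P)$, correct ordering) and then reverse the resulting arc again. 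This is legitimate because extremals are reversible, each maneuver being executable both forward and backward (superscripts $\pm$), and because length is invariant under reversal.

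Reassembling $\gamma$ by substituting each $\tilde\gamma_i$ for the corresponding $\gamma_i$ and keeping the inside sub-arcs yields a candidate path $\tilde\gamma$ from $Q$ to $P$ whose total length does not exceed $\ell(\gamma)$, and is strictly smaller whenever $\gamma$ made at least one excursion outside $D(P)$; if $\gamma$ never left $D(P)$ we simply take $\tilde\gamma=\gamma$, which accounts for the equal-length case. The point needing the most care -- and the main obstacle -- is to certify that the reassembled $\tilde\gamma$ is again a \emph{feasible extremal} path in $\mathcal{P}_Q$, not merely a shorter curve.

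At each junction on $C(P)$ the heading of the incoming arc and that of the outgoing (replaced) arc need not coincide, so I would interpose a rotation on the spot $*$; since $*$ has zero length it does not affect the length estimate, and because the pre- and post-rotation values of $\beta$ both lie in $[\phi_1,\phi_2]$ while $\beta$ varies monotonically during an on-the-spot rotation, the sensor constraints~\eqref{eq:S_R}--\eqref{eq:S_L} remain satisfied throughout every reconnection. Feasibility of the interior of each $\tilde\gamma_i$ is guaranteed directly by Theorem~\ref{theo:InNOTOut}, and the untouched inside sub-arcs are feasible by assumption. Hence $\tilde\gamma$ is an admissible extremal path in $\mathcal{P}_Q$ entirely contained in $D(P)$ with $\ell(\tilde\gamma)\leq\ell(\gamma)$, which establishes the claim.
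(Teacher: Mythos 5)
Your proof is correct and follows exactly the route the paper intends: the paper gives no written proof of the corollary, presenting it as an immediate consequence of Theorem~\ref{theo:InNOTOut}, and your decomposition of the path into maximal excursions outside $D(P)$ with endpoints on $C(P)$, each replaced via the theorem, is precisely that argument spelled out in full (including the routine details of finitely many crossings, reversal to meet the hypothesis $\psi_A>\psi_B$, and zero-length rotations $*$ at the junctions). The only case you leave implicit is an excursion with $\psi_A=\psi_B$, i.e.\ $A=B$, which is a loop that can simply be deleted, so it does not affect correctness.
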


\section{Optimal paths for points on $C(P)$}
\label{sec:OptimalCS}
Our study of the optimal synthesis begins in this section addressing optimal paths from points on $C(P)$.  We first need to establish an existence result of optimal paths.

\begin{proposition}
\label{prop:existence}
For any $Q \in C(P)$ there exists a feasible shortest path to $P$.
\end{proposition}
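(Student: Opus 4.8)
The plan is to recast the problem as a constrained optimal control problem and to invoke a direct (Filippov-type) existence argument. The quantity to be minimized is the path length, which for an arc-length parametrization reads $\ell(\gamma)=\int_0^T |\nu(t)|\,dt$. The dynamics \eqref{eq:PolarVehicle} are affine in the controls, so for each fixed state $\eta=(\rho,\psi,\beta)$ the set of attainable velocities is the image of $U$ under a linear map; since $U$ is compact and convex by \eqref{eq:AdmisControls}, this velocity set is itself compact and convex. The sensor constraints \eqref{eq:S_R}--\eqref{eq:S_L}, i.e. $\phi_1\le\beta\le\phi_2$, define a closed admissible state set. Thus all the structural hypotheses of a Filippov-type existence theorem (closedness of the constraint set, compact-convex velocity sets, and a cost that is lower semicontinuous in the trajectory) hold automatically; what remains is to verify nonemptiness of the feasible family and a suitable boundedness of a minimizing sequence.

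For \emph{nonemptiness} I would exhibit one explicit feasible path from $Q$ to $P$ respecting the constraints. Because the vehicle can rotate on the spot (the origin lies in the interior of $U$), one can first rotate $\beta$ to a value on the admissible boundary $\phi_1$ or $\phi_2$, then follow the corresponding extremal maneuver (a logarithmic spiral $T_1$/$T_2$, or the degenerate $C$/$H$ arc) that keeps the feature exactly on the sensor border, and finally rotate and run a straight segment $S$ to reach $P$. Such a concatenation of finitely many constraint-respecting extremal arcs shows $\mathcal{P}_Q\neq\emptyset$ and furnishes a finite upper bound on the infimum of $\ell$. This step is the geometric crux, and it is delicate precisely in the side-sensor regimes of fig.~\ref{fig:AllCases} where the forward direction need not lie inside the field of view, so that pointing straight toward $P$ is infeasible and the connecting path must be built entirely from spiral and lateral arcs.

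Next I would establish \emph{boundedness}. By Corollary~\ref{coroll:InNOTOut} every path in $\mathcal{P}_Q$ with $Q\in C(P)$ can be replaced by one of no greater length lying entirely in the closed disk $D(P)$, so the minimization may be restricted to feasible paths contained in this compact set; a minimizing sequence $\gamma_n$ then has its positional components confined to $D(P)$ and, after arc-length parametrization, uniformly Lipschitz. Since on-the-spot rotations carry zero length, one may discard redundant spot rotations from each $\gamma_n$ without increasing its length, so the total heading variation can be taken uniformly bounded as well; otherwise the minimizing sequence could accumulate arbitrarily many free rotations, and ruling this out is the main technical obstacle in this step.

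Finally I would apply the \emph{direct method}. Choosing $\gamma_n$ feasible, contained in $D(P)$, with $\ell(\gamma_n)\to\inf_{\mathcal{P}_Q}\ell$, the uniform Lipschitz and bounded-variation bounds let Ascoli--Arzel\`a extract a subsequence converging uniformly to some $\gamma^\star$; weak-$*$ compactness of the controls in $L^\infty$ together with the convexity of the velocity set (the closure lemma) guarantees that $\gamma^\star$ is an admissible trajectory of \eqref{eq:PolarVehicle} with controls in $U$. The closedness of the constraint set preserves \eqref{eq:S_R}--\eqref{eq:S_L} in the limit, continuity of the endpoints gives $\gamma^\star(0)=Q$ and $\gamma^\star(1)=P$, and lower semicontinuity of $\ell$ yields $\ell(\gamma^\star)\le\liminf_n\ell(\gamma_n)=\inf_{\mathcal{P}_Q}\ell$. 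Hence $\gamma^\star$ is a feasible path of minimal length, proving the claim. I expect the feasibility construction of the second paragraph, and the control of free rotations in the third, to be the parts requiring the most care; the compactness step itself is then routine.
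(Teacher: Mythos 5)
Your overall strategy coincides with the paper's: restrict attention to paths in the compact set $D(P)$ via Corollary~\ref{coroll:InNOTOut}, exhibit one feasible path to bound the infimum, and invoke a Filippov-type existence theorem for Lagrange problems (the paper cites Cesari directly rather than unpacking the direct method as you do, but that part of your argument is a faithful expansion of the same idea, and your verification of the structural hypotheses --- compact convex velocity sets from \eqref{eq:AdmisControls}, closed state constraints from \eqref{eq:S_R}--\eqref{eq:S_L} --- is sound).

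However, there is a genuine gap in the step you yourself identify as the geometric crux: nonemptiness. Your recipe is ``rotate to a sensor border, follow the corresponding spiral, then rotate and run a straight segment $S$ to reach $P$,'' but in the Side and Lateral regimes that final straight segment is not guaranteed to exist. The set of points from which $P$ can be reached by a single admissible straight arc is $SF(P)\cup SB(P)$, and by Remark~\ref{rem:SecondCaseEye} these are bounded lens-shaped regions pinched between an arc and a chord; a spiral $T_1$ launched from an arbitrary $Q\in C(P)$ winds around $O_W$ with its radius contracting geometrically at each turn and need not ever enter that lens, so the concatenation may simply never line up with $P$. Moreover, rotations on the spot only move $\beta$ within $[\phi_1,\phi_2]$ and cannot repair this, since they do not change position. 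The paper's proof avoids the straight segment entirely: it connects the spiral $E_1$ through $Q$ to the spiral $E_2$ through $P$ (e.g., $T^{R+}_{1Q}*T^{R-}_{2P}$ in the Side case, with the circle $C$ or half-line $H$ as degenerate substitutes in the borderline cases), relying on the fact that two logarithmic spirals with distinct characteristic angles always intersect --- the same fact the paper uses to assert controllability. This spiral--spiral intersection is the missing ingredient your construction needs; with it, your feasibility paragraph (and hence the whole proof) goes through, and it also yields the explicit length upper bounds the paper records case by case. Your concern in the third paragraph about unbounded heading variation along a minimizing sequence, while legitimate as stated, is not an obstacle in the paper's formulation: spot rotations have zero length and the Filippov theorem is applied in the $(\rho,\psi,\beta)$ state space, where $\beta$ is already confined to the compact interval $[\phi_1,\phi_2]$.
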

\begin{proof}
Because of state constraints~\eqref{eq:S_R}, and~\eqref{eq:S_L}, and the restriction of optimal paths in $D(P)$ (Corollary~\ref{coroll:InNOTOut}) the state set is compact. Furthermore, it is possible to give an upper-bound on the optimal path length for all $\Gamma\in[0,\,\frac{\pi}{2}]$. Indeed, given a point $Q$ at distance $\rho$ from $O_W$ the optimal path to $P$ is shorter or equal to the following paths based on the value of $\Gamma$ and $\delta$:
\begin{itemize}
\item Frontal ($0\leq \Gamma\leq\frac{\delta}{2}$): $S^{+}*S^{-}$ or $H^{+}*H^{-}$
    of length $\rho+\rho_P$;
\item Side ($\frac{\delta}{2}<\Gamma<\frac{\pi-\delta}{2}$): $T^{R+}_{1Q}*T^{R-}_{2P}$, of length
    $\left(\frac{\rho-\rho_N}{\cos\phi_1} + \frac{\rho_P-\rho_N}{\cos\phi_2}\right)$,
    where $N$ is the intersection point between spirals $T^{R}_{1Q}$ and $T^{R}_{2P}$ through $Q$ and $P$ respectively;
\item Borderline Side ($\Gamma = \frac{\pi-\delta}{2}$: $T^{R+}_1*C^-_{P}$) of length
    $\left(\frac{\rho-\rho_P}{\cos\phi_1} + (\psi_N-\psi_P)\rho_P\right)$, where $N$ is the intersection point between spirals $T^{R}_1$ and $C_{P}$;
\item Lateral ($\frac{\pi-\delta}{2}<\Gamma\leq\frac{\pi}{2}$): $T^{L-}_{2Q}*T^{R-}_{1P}$, of length
    $\left(\frac{\rho-\rho_N}{\cos\phi_2} + \frac{\rho_P-\rho_N}{\cos\phi_1}\right)$,
    where $N$ is the intersection point between spirals $T^{L}_{2Q}$ and $T^{R}_{1P}$.
\end{itemize}
The system is also controllable because there always exists an intersection
point between two spirals (even if degenerated in half--lines or circumferences) with different characteristic angle even if both clockwise or counterclockwise around the feature. Hence, Filippov existence theorem for Lagrange problems can be invoked~\cite{Cesari}.
\end{proof}

In the following we provide a set of propositions that completely describe a sufficient optimal finite language for all values of $\Gamma\in[0,\,\frac{\pi}{2}]$.



\begin{definition}
\label{def:Eye1}
For any starting point $G = (\rho_G,\,\psi_G)$, let $SF(G)$ ($SB(G)$) be the set of all points reachable from $G$ with a forward (backward) straight line without violating the SR constraints.
\end{definition}


We denote with  $\partial SF_1(G)$ and $\partial SF_2(G)$ ($\partial SB_1(G)$ and $\partial SB_2(G)$) the borders of  $SF(G)$ ($SB(G)$). Also, let $C_i(G)$ denote the circular arcs from $G$ to $O_W$ such that, $\forall V \in C_i(G)$, $\widehat{G V O_W} = \pi-|\phi_i|$.

\begin{remark}
\label{rem:FirstCaseEye}
Based on simple geometric considerations, for any starting point $G = (\rho_G,\,\psi_G)$, for $0\leq \Gamma\leq\frac{\delta}{2}$ (Frontal Case), $SF(G)$ is the region between $\partial SF_2(G)=C_2(G)$ and $\partial SF_1(G)=C_1(G)$. Let $r_1(G)$ ($r_2(G)$) denote the half--line from $G$ forming an angle $\psi_G-\phi_1$ ($\psi_G-\phi_2$) with the $X_W$ axis (cf. fig.~\ref{fig:FirstCaseEye}). $SB(G)$ is the cone delimited by $\partial SB_1(G)=r_1(G)$ and $\partial SB_2(G)=r_2(G)$, outside circle with center in $O_W$ and radius $\rho_G$. Notice that, $SF(G)$ lays completely in the circle with center in $O_W$ and radius $\rho_G$. Moreover, in the particular case in which $\Gamma = \frac{\delta}{2}$ (Borderline Frontal Case), $E_1 = H$ and $\partial SF_1(G)$ degenerates in the chord ($\overline{GO_W}$) between $G$ and $O_W$, aligned with $r_1(G)$.
\end{remark}

As a consequence of Remark~\ref{rem:FirstCaseEye}, both $SF(G)$ and $SB(G)$ are tangent in $G$ to $T_1^L$ or $H$ and $T_2^R$.


\begin{figure}[t!]
\centering
\includegraphics[width=0.95\columnwidth]{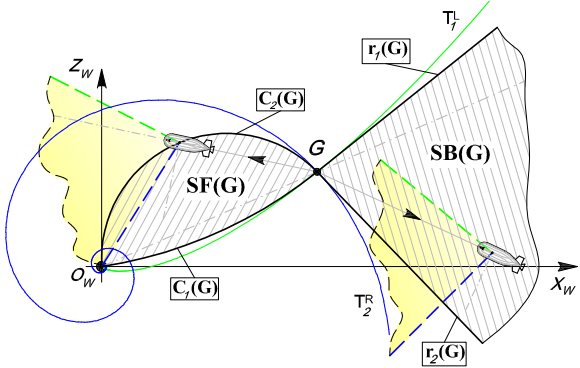}
\caption{Forward and backward straight path Regions from $G$ for $0\leq \Gamma\leq\frac{\delta}{2}$.}
\label{fig:FirstCaseEye}
\end{figure}

\begin{remark}
\label{rem:SecondCaseEye}
For any starting point $G = (\rho_G,\,\psi_G)$, and for $\frac{\delta}{2}<\Gamma\leq \frac{\pi-\delta}{2}$ (Side case), let $S_{G_F}$ be the chord between $G$ and $G_F = (\rho_G\frac{\sin\phi_1}{\sin\phi_2},\,\psi_G+(\phi_2-\phi_1))\in C_2(G)$,
i.e. such that $\widehat{O_WGG_F}=\phi_1$ (cf. fig.~\ref{fig:SecondCaseEye}). Naming with $C_{G_F}$ the arc between $G$ and $G_F$, $SF(G)$ is the region between arc $\partial SF_2(G)=C_{G_F}$ and chord $\partial SF_1(G)=S_{G_F}$.
Consider the rotation and scale that maps $G_F$ in $G$ and $G$ in $G_B$: we have $\partial SB_1(G)=\partial SF_1(G_B)$, i.e. $\partial SB_2(G)=\partial SF_2(G_B)$. Moreover, for all point $V$ on the circular arc $C_{G_B}$ from $G_B$ to $G$, angle $\widehat{G_BVO_W}=\pi-|\phi_2|$, and angle $\widehat{O_WG_BG}=\phi_1$. Notice that, in this case, $SF(G)$ lays completely in the circle with center in $O_W$ and radius $\rho_G$. Notice that, in the particular case in which $\Gamma = \frac{\pi-\delta}{2}$ (Borderline Side Case), $E_2 = C$ and $\partial SF_2(G)$ is an arc from $G$ to $G_F$ on a semicircle with diameter $\rho_G$.
\end{remark}



As a consequence of Remark~\ref{rem:SecondCaseEye}, $SF(G)$ is tangent in $G$ to $T_1^R$ and $T_2^R$ or $C$. 
Moreover, $SF(G)$ is tangent in $G_F$ to $T_1^R$ and $T_2^R$ or $C$, see fig.~\ref{fig:SecondCaseEye}.

Fig.~\ref{fig:LateralCaseEye} shows the $SF(G)$ and $SB(G)$ regions described in~\ref{rem:SecondCaseEye} for the Lateral case. Notice that, in this case, $SF(G)$ does not lay completely in the circle with center in $O_W$ and radius $\rho_G$.

\begin{figure}[t!]
\centering
\includegraphics[width=0.8\columnwidth]{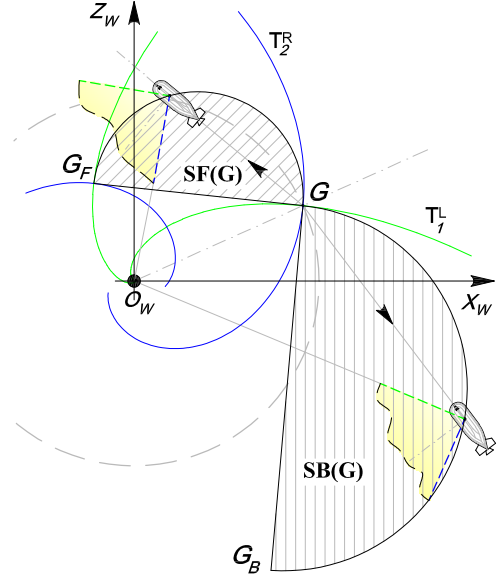}
\caption{Forward and backward straight path Regions from $G$ for $\frac{\pi-\delta}{2}\leq\Gamma\leq\frac{\pi}{2}$.}
\label{fig:LateralCaseEye}
\end{figure}


\begin{remark}
\label{rem:BorderEye}
Optimal forward (backward) straight arcs from any $G$ ends on $C_{G_F}$ ($C_{G_B}$) (see also~\cite{SFPB-TRO09} for details).
\end{remark}

Based on all the above properties, we are now able to obtain a sufficient family of optimal paths by excluding particular sequences of extremals.

\begin{theorem}\label{theo:rho_nomax}
Any path consisting in a sequence of a backward extremal arc followed by a forward extremal arc is not optimal.
\end{theorem}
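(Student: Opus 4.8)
The plan is to recognize a backward-then-forward concatenation as a path that attains a local maximum of the distance $\rho$ to the feature at its cusp, and then to invoke Theorem~\ref{theo:InNOTOut} to replace the offending piece by a strictly shorter one. I would write the path as $\gamma_1\gamma_2$, where $\gamma_1$ is the backward arc ending at the cusp $J$ and $\gamma_2$ the forward arc leaving $J$; any rotation on the spot inserted at $J$ has zero length and, since $\dot\rho=-\nu\cos\beta$ from \eqref{eq:PolarVehicle} vanishes for $\nu=0$, it leaves $\rho_J$ unchanged and can be absorbed into the description of $J$.

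First I would analyze the sign of $\dot\rho$. Along $\gamma_1$ one has $\nu<0$ and along $\gamma_2$ one has $\nu>0$; whenever the running angle satisfies $\cos\beta>0$ this yields $\dot\rho>0$ on $\gamma_1$ (so $\rho$ strictly increases up to $J$) and $\dot\rho<0$ on $\gamma_2$ (so $\rho$ strictly decreases away from $J$), making $J$ a strict local maximum of $\rho$ along $\gamma_1\gamma_2$. For the spiral arcs $\beta$ is the constant characteristic angle $\phi_1$ or $\phi_2$, and $\cos\beta>0$ holds whenever the active border is below $\pi/2$ (the Frontal, Side and Borderline configurations); for straight arcs $\beta$ varies but stays in $[\phi_1,\phi_2]$, so the same sign conclusion applies.

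Second, I would exploit this local maximum to reduce to Theorem~\ref{theo:InNOTOut}. Let $A=\gamma_1(0)$ and $B=\gamma_2(1)$ be the endpoints and set $r=\max(\rho_A,\rho_B)$. Since $\rho$ climbs to $\rho_J>r$ and descends back to an endpoint value $\le r$, there are points $A'$ on $\gamma_1$ and $B'$ on $\gamma_2$ (one of them coinciding with $A$ or $B$) with $\rho_{A'}=\rho_{B'}=r$ and $\rho_G>r$ for every intermediate $G$, including $J$. The sub-arc from $A'$ to $B'$ is itself an extremal path lying strictly outside the circle of radius $r$, so Theorem~\ref{theo:InNOTOut} furnishes an extremal path of strictly smaller length joining the same endpoints inside that circle; splicing it back in strictly shortens $\gamma_1\gamma_2$, contradicting optimality. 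Along the way I must verify the angular hypothesis $\psi_{A'}>\psi_{B'}$ of Theorem~\ref{theo:InNOTOut}, which follows from the winding sense of the arcs (on a right spiral $\dot\psi=\tfrac{\sin\phi}{\rho}\nu$ fixes the monotonicity of $\psi$ with the sign of $\nu$); this is sign-bookkeeping, to be arranged if necessary through the mirrored statement of the theorem.

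The main obstacle is the Lateral configuration, where the active angle $\phi_2>\pi/2$ gives $\cos\phi_2<0$ and reverses the monotonicity: on $T_2^L$ a backward arc \emph{decreases} $\rho$ while a forward arc \emph{increases} it. For a cusp built from such arcs, $J$ need not be a local maximum of $\rho$, and Theorem~\ref{theo:InNOTOut} (which trades an outer sub-arc for a strictly shorter inner one) does not apply verbatim, since the dip is already on the shorter, inner side. Resolving this is the crux of the proof: I would either show that the spot rotation at $J$ can be removed so as to eliminate the velocity reversal, thereby reducing the concatenation to one covered by the monotonicity argument above, or use the feasibility constraints \eqref{eq:S_R}--\eqref{eq:S_L} and the structure of the $SF(G)$, $SB(G)$ regions to argue that such a backward-to-forward transition cannot belong to a candidate optimal path in the first place.
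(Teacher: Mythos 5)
Your main line is exactly the paper's: from $\dot\rho=-\nu\cos\beta$ deduce that $\rho$ is strictly monotone (increasing on backward, decreasing on forward extremals), use continuity across the cusp to extract points $A'$, $B'$ at equal radius with the intermediate sub-arc strictly outside that circle, and invoke Theorem~\ref{theo:InNOTOut} to splice in a strictly shorter inner path. That reduction, including the zero-length rotation at the cusp and the $\psi_{A'}>\psi_{B'}$ bookkeeping, matches the paper's argument and is sound where the monotonicity holds.

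There are, however, two genuine gaps. First, the Borderline Side case $E_2=C$ (i.e.\ $\phi_2=\pi/2$) is not covered, and your claim that $\cos\beta>0$ holds in the ``Borderline configurations'' is false precisely there: $\cos\phi_2=0$, so $\rho$ is \emph{constant} along $C$, strict monotonicity fails, and the continuity argument can never produce a sub-arc lying strictly outside a circle, so Theorem~\ref{theo:InNOTOut} gets no traction on sequences such as $C^-*S^+$ or $E_1^-*C^+$. The paper disposes of this case with a separate elementary argument that your proposal lacks: any sequence of an $S$ (or $E_1$) arc and a $C$ arc is inscribed between two circumferences centered at $O_W$, and among such sequences the shortest realization runs the $C$ arc along the \emph{smaller} circle, necessarily preceded by the forward $S$ (or $E_1$) arc of the same length $\ell$ --- hence backward-then-forward is never shortest. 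Second, you explicitly leave the Lateral case ($\phi_2>\pi/2$) unresolved, sketching two strategies without executing either; as submitted, the proof therefore does not establish the theorem in that regime (note the difficulty also infects forward \emph{straight} arcs there, since $\beta$ may exceed $\pi/2$ along $S^+$, not only the spiral $E_2$). For what it is worth, the paper's own proof is silent on this point --- it asserts the monotonicity for every extremal except $C$ without commenting on $\cos\phi_2<0$ --- so your sign analysis identifies a real subtlety; but naming the crux is not the same as closing it, and a complete proof must either justify the paper's sign conventions for $E_2^{\pm}$ in the Lateral case or supply one of your two suggested arguments in full.
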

The proof of this theorem, whose details can be found in section~\ref{prof:Th2} of the Appendix, is based on the fact that for continuity of paths, for any sequence of a backward extremal followed by a forward one, there exist points $A$ and $B$ that verify hypothesis of Theorem~\ref{theo:InNOTOut}.

\begin{theorem}\label{theo:EE}
Any path consisting in a sequence of an extremal arc $E_i$ and an extremal arc $E_j$ followed in the same direction is not optimal for any $i,j\in\{1,\,2\}$ with $i\neq j$.
\end{theorem}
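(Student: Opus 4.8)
The plan is to show that any same-direction concatenation of $E_i$ and $E_j$ ($i\neq j$) can be strictly shortened by replacing the transition between the two spirals with a single straight segment, thus cutting the corner at the junction. The whole argument rests on the tangency properties established in Remarks~\ref{rem:FirstCaseEye}--\ref{rem:SecondCaseEye} and their consequences, namely that at every point $G$ the forward (backward) straight reachable set $SF(G)$ ($SB(G)$) is tangent to both spirals $T_1$ and $T_2$ (or to their degenerate versions $H$, $C$) at $G$; equivalently, the forward (backward) feasible straight directions form a cone of angular width $\delta$ bounded exactly by the tangents to $T_1$ and $T_2$.

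First I would fix a candidate path containing a sub-path $E_i^{\epsilon}*E_j^{\epsilon}$, where $\epsilon\in\{+,-\}$ is the common traversal direction and the intermediate $*$ (possibly trivial) rotates $\beta$ from $\phi_i$ to $\phi_j$ at the junction point $J$. Let $u_i$ and $u_j$ be the unit tangents at $J$ to $E_i$ and $E_j$, oriented in the direction of motion $\epsilon$. Picking a point $A$ on $E_i$ just before $J$ and a point $B$ on $E_j$ just after $J$, the chord satisfies $B-A = s\,u_i + t\,u_j + o(s+t)$ with $s,t>0$, hence its direction is a strictly positive combination of $u_i$ and $u_j$ and lies in the open cone they span. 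Since $\{i,j\}=\{1,2\}$, that cone is precisely the direction-of-motion feasible straight cone at $J$; by continuity of $SF(\cdot)$ ($SB(\cdot)$) the segment $\overline{AB}$ is a feasible forward (backward) straight maneuver for $A,B$ close enough to $J$, i.e.\ $B\in SF(A)$ (resp.\ $B\in SB(A)$).

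Next I would compare lengths. Rotating $\beta$ from $\phi_i$ to $\phi_j$ at fixed position turns the heading by $|\phi_j-\phi_i|=\delta>0$, so $u_i$ and $u_j$ make a nonzero angle and $A,J,B$ are not collinear. The strict triangle inequality gives $|\overline{AB}|<|\overline{AJ}|+|\overline{JB}|$, and since each spiral arc is at least as long as its own chord, $\ell(\overline{AB}) < \ell(E_i|_A^J) + \ell(E_j|_J^B)$, while the zero-length rotation $*$ contributes nothing. Replacing the two sub-arcs around $J$ by $\overline{AB}$ therefore yields a feasible path from the same start to the same end that is strictly shorter, so the original path is not optimal.

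The main obstacle is the case analysis needed to certify feasibility of the shortcut uniformly in the sensor configuration. Across the Frontal, Side and Lateral regimes (and the degenerate borderline cases $E_1=H$, $E_2=C$) the handedness of the spirals and the correspondence between traversal direction and radial motion change: e.g.\ in the Lateral case $\phi_2>\pi/2$, so a forward $E_2$ arc increases $\rho$ and $SF(G)$ is no longer contained in the circle of radius $\rho_G$. In each regime I would have to verify that the oriented tangents $u_i,u_j$ bound the relevant feasible cone ($SF$ for $\epsilon=+$, $SB$ for $\epsilon=-$) on the correct side, so that the positive combination $s\,u_i+t\,u_j$ points into the feasible region rather than out of it. The consequences of Remarks~\ref{rem:FirstCaseEye} and~\ref{rem:SecondCaseEye} supply exactly this orientation information, and are what make the single geometric argument go through in all cases.
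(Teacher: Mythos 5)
Your corner-cutting argument is correct, and it is essentially the proof the paper intends: Theorem~\ref{theo:EE} is stated without an explicit proof, but the tangency consequences recorded after Remarks~\ref{rem:FirstCaseEye} and~\ref{rem:SecondCaseEye} (that $SF(G)$ and $SB(G)$ are tangent at $G$ to $E_1$ and $E_2$, including the degenerate $H$ and $C$ cases) are precisely the ingredients your shortcut exploits, exactly as in the symmetric-FOV analysis of~\cite{SFPB-TRO09}. The one detail to pin down is the uniformity you already flag — e.g.\ choose $A$ and $B$ at equal arc-length from the junction so that the chord direction stays strictly inside the open cone of feasible headings, whence $\beta\in(\phi_1,\phi_2)$ holds along the whole short segment in every regime (the argument needs only that the forward tangent to $E_i$ at $J$ has bearing $\beta=\phi_i$, so the handedness changes across the Frontal, Side and Lateral cases are in fact immaterial, and a same-direction junction at $O_W$ cannot occur since forward arcs terminate there and backward arcs strictly increase $\rho$).
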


Notice that the feasible sequences consisting of two extremals that we still need to discuss are those starting or ending with $S$ followed in any direction ($E^+E^-$ and $E^-E^+$ are obviously not optimal).

\begin{proposition}
\label{prop:FirstCaseS+}
From any starting point $A$, any path $\gamma$ of type $S^+*E^+_2$ and $S^+*E^-_1$ to $B$ can be shortened by a path of type $S^+E^+_2$ or $E^+_2*E^-_1$. Moreover, any path $\gamma$ of type $S^+*E^+_1$ or $S^+*E^-_2$ can be shortened by a path of type $E^+_1S^+$ or $E^+_1*E^-_2$.
\end{proposition}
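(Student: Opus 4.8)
The plan is to exploit the geometry of the forward/backward straight‑reachable sets $SF$ and $SB$ established in Remarks~\ref{rem:FirstCaseEye}--\ref{rem:BorderEye}, together with the tangency properties of the spirals $E_1,E_2$, to construct an explicit shortcut for each of the four word types. The central observation is that a rotation on the spot ($*$) at the junction of two extremals is free (zero length), so a word such as $S^+*E_2^+$ traces a straight segment to some intermediate point $M$, then spins in place, then follows the $E_2$ spiral forward; any competing word that reaches $B$ along a no‑longer path is an improvement. I would therefore treat the two claims in the proposition symmetrically, since the second pair ($S^+*E_1^+$, $S^+*E_2^-$) is obtained from the first pair ($S^+*E_2^+$, $S^+*E_1^-$) by interchanging the roles of the right border ($\phi_1$, spiral $E_1$) and the left border ($\phi_2$, spiral $E_2$); I would prove the first pair in detail and indicate that the second follows by the analogous argument.

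First I would handle $S^+*E_2^+$. Let $\gamma$ run forward along a straight line from $A$ to an intermediate point $M$, rotate on the spot at $M$, and then follow $E_2^+$ forward to $B$. By Remark~\ref{rem:BorderEye} the optimal forward straight arc from $A$ must terminate on the arc $C_{A_F}$ (equivalently, $M\in\partial SF_2(A)=C_{A_F}$), and by the tangency statement following Remark~\ref{rem:SecondCaseEye} the set $SF(A)$ is tangent at its endpoint to the spiral $E_2$. Geometrically this means the straight segment $\overline{AM}$ and the spiral $E_2$ meet the same boundary circle, so that the spiral $E_2^+$ through $A$ (call it $E_{2A}$) and the straight segment produce two sides of a region that I can collapse: the competitor $S^+E_2^+$ reaches $B$ by first going straight and then spiralling \emph{without} the intermediate rotation, or, when the straight part is swallowed entirely, by the pure‑spiral word $E_2^+*E_1^-$. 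The length comparison reduces to comparing a broken path (segment $+$ spiral) against a single spiral arc sharing the same endpoints, and the inequality follows because the straight chord plus the free rotation can be re‑routed onto the tangent spiral at no extra cost while eliminating the detour.

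Next I would treat $S^+*E_1^-$: here the vehicle goes forward straight to $M$, rotates, then follows $E_1$ backward to $B$. The key extra ingredient is Theorem~\ref{theo:rho_nomax}, which already forbids a backward arc \emph{followed by} a forward one; I would use the complementary direction here, namely that a forward segment joined to a backward spiral forces the path to attain a local maximum of $\rho$ at $M$, and I can appeal to Theorem~\ref{theo:InNOTOut} to replace the ``outer'' portion of the path (the part with $\rho_G>\rho$) by a strictly shorter ``inner'' portion, yielding the claimed shortcut $E_2^+*E_1^-$. The main obstacle, and the step I expect to require the most care, is verifying that the endpoints produced by the shortcut genuinely coincide with $B$ and that the replacement path remains \emph{feasible}, i.e.\ respects the sensor constraints~\eqref{eq:S_R}--\eqref{eq:S_L} throughout; the tangency of $SF(A)$ and $SB(A)$ to $E_1$ and $E_2$ (the consequences drawn after Remarks~\ref{rem:FirstCaseEye} and~\ref{rem:SecondCaseEye}) is exactly what guarantees that the substituted spiral arc stays inside the admissible cone, so the bulk of the work is a bookkeeping argument on which boundary arc each endpoint lands. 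Once feasibility and endpoint matching are secured, the length inequalities are either immediate from Theorem~\ref{theo:InNOTOut} or reduce to the elementary fact that a straight chord is shorter than the tangent spiral arc subtending the same endpoints, and the symmetric pair is dispatched by the border‑swap described above.
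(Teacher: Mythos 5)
Your construction for the first word type ($S^+*E_2^+$) is essentially the paper's: the figures accompanying the proposition shorten $S^+*E_2^+$ by the tangent concatenation $S^+E_2^+$ through the tangency point (fig.~\ref{fig:P31}), which is the re-routing you describe, even if your length comparison is stated loosely. The genuine gap is in your treatment of $S^+*E_1^-$. You claim that a forward segment followed by a backward spiral ``forces the path to attain a local maximum of $\rho$ at $M$'' and then invoke Theorem~\ref{theo:InNOTOut}. This is backwards: from the kinematics~\eqref{eq:PolarVehicle}, $\dot\rho=-\nu\cos\beta$, so with $\beta\in[\phi_1,\phi_2]$ the distance from $O_W$ is strictly decreasing along forward arcs and strictly increasing along backward arcs --- exactly the monotonicity stated in the paper's proof of Theorem~\ref{theo:rho_nomax}. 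Hence $S^+*E_1^-$ attains a local \emph{minimum} of $\rho$ at the junction, and Theorem~\ref{theo:InNOTOut}, which replaces outward bulges (all of $\gamma$ at $\rho_G>\rho$) by inner paths, does not apply; it is the tool for killing backward-then-forward sequences, not forward-then-backward ones. Nor could any analogous ``inward-bulge'' shortening hold in general: valley-shaped words such as $E_1^+*E_1^-$ through $O_W$ are optimal (Region III of fig.~\ref{fig:SubdivisionInCP}), so your mechanism would prove too much. The paper's actual argument (fig.~\ref{fig:P32}) is a case analysis on where $B$ lies along the backward spiral: for $B'$ in the portion reachable by the tangent construction, the path is shortened by $S^+E_2^+$ through the tangency point, and otherwise by the two-spiral word $E_2^+*E_1^-$ through the intersection of the $E_2$ spiral from $A$ with the $E_1$ spiral into $B''$.

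A secondary caution: your reduction of the second pair ($S^+*E_1^+$, $S^+*E_2^-$) to the first ``by interchanging the roles of the right and left borders'' amounts to an axial symmetry, and Remark 1 of the paper warns precisely that the alphabet is not invariant under axial symmetry --- in the Side case both $E_1$ and $E_2$ are right spirals, and $\partial SF_1(G)$ is a chord while $\partial SF_2(G)$ is a circular arc, so the two borders are geometrically inequivalent. The asymmetry shows in the conclusion itself: the shortcut for $E_2$ places the straight arc before the spiral ($S^+E_2^+$), while the shortcut for $E_1$ places it after ($E_1^+S^+$), as in figs.~\ref{fig:P33} and~\ref{fig:P34}. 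Each case therefore needs its own tangency construction rather than a border swap; that part of your plan could be repaired routinely, but the Theorem~\ref{theo:InNOTOut} appeal for the forward-then-backward words cannot.
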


\begin{figure*}[t!]
\centering
\subfigure[From $A$ to $B$, path $S^+*T^{R+}_2$ through $\text{z}$ and $\text{v}$ can be shortened by $S^+T^{R+}_2$ through $\text{v}$, where $S$ arc is tangent to $T^{R}_2$.]{\label{fig:P31}\includegraphics[width=0.35\textwidth]{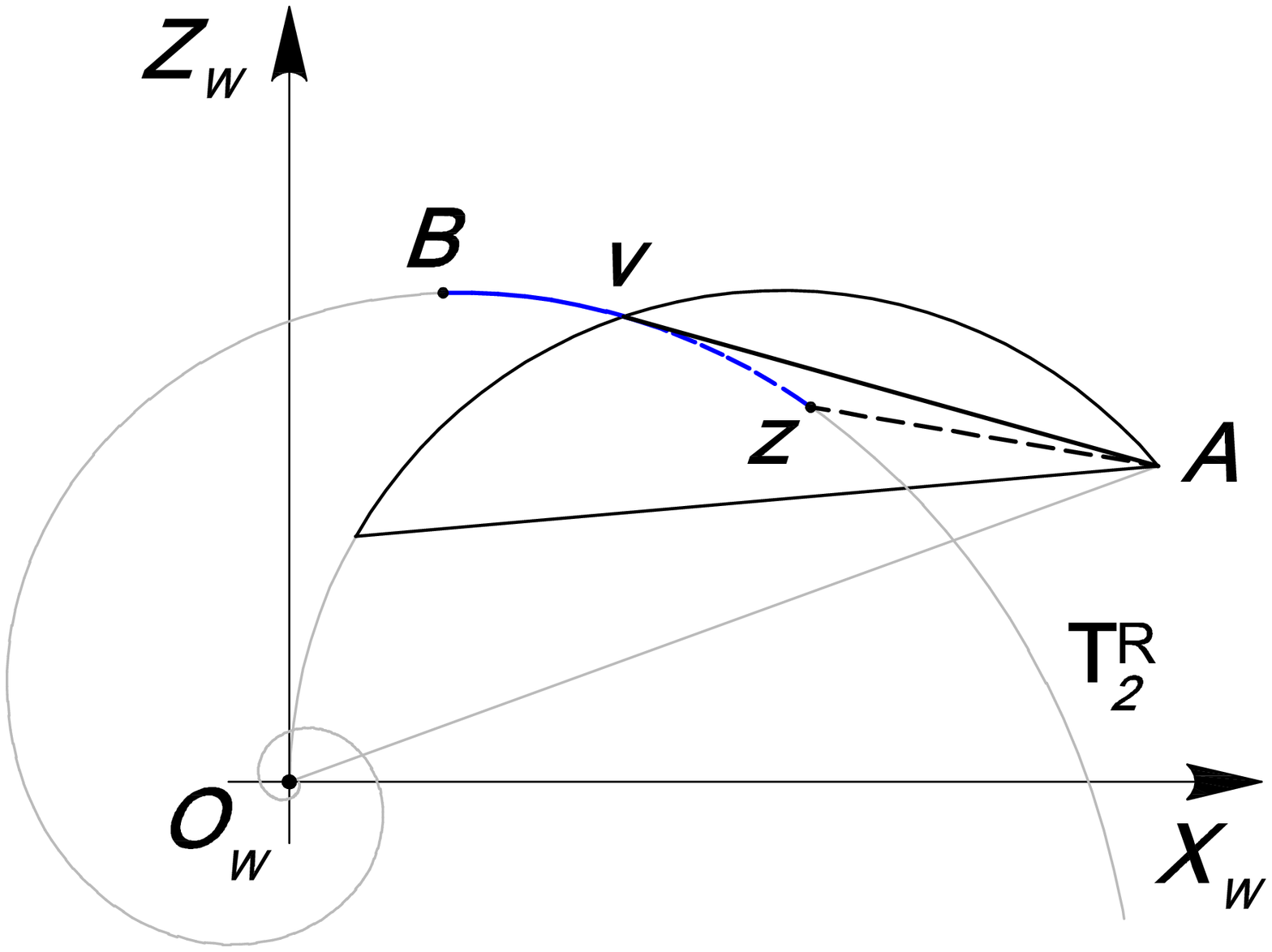}}
\;
\subfigure[From $A$ to $B^\prime$, path $S^+*T^{R-}_1$ through $\text{z}$ can be shortened by a path of type $S^+T^{R+}_2$ through $\text{v}$, whereas from $A$ to $B^{\prime\prime}$ by a path of type $T^{R+}_2*T^{R-}_1$ through $\text{g}$.]{\label{fig:P32}\includegraphics[width=0.37\textwidth]{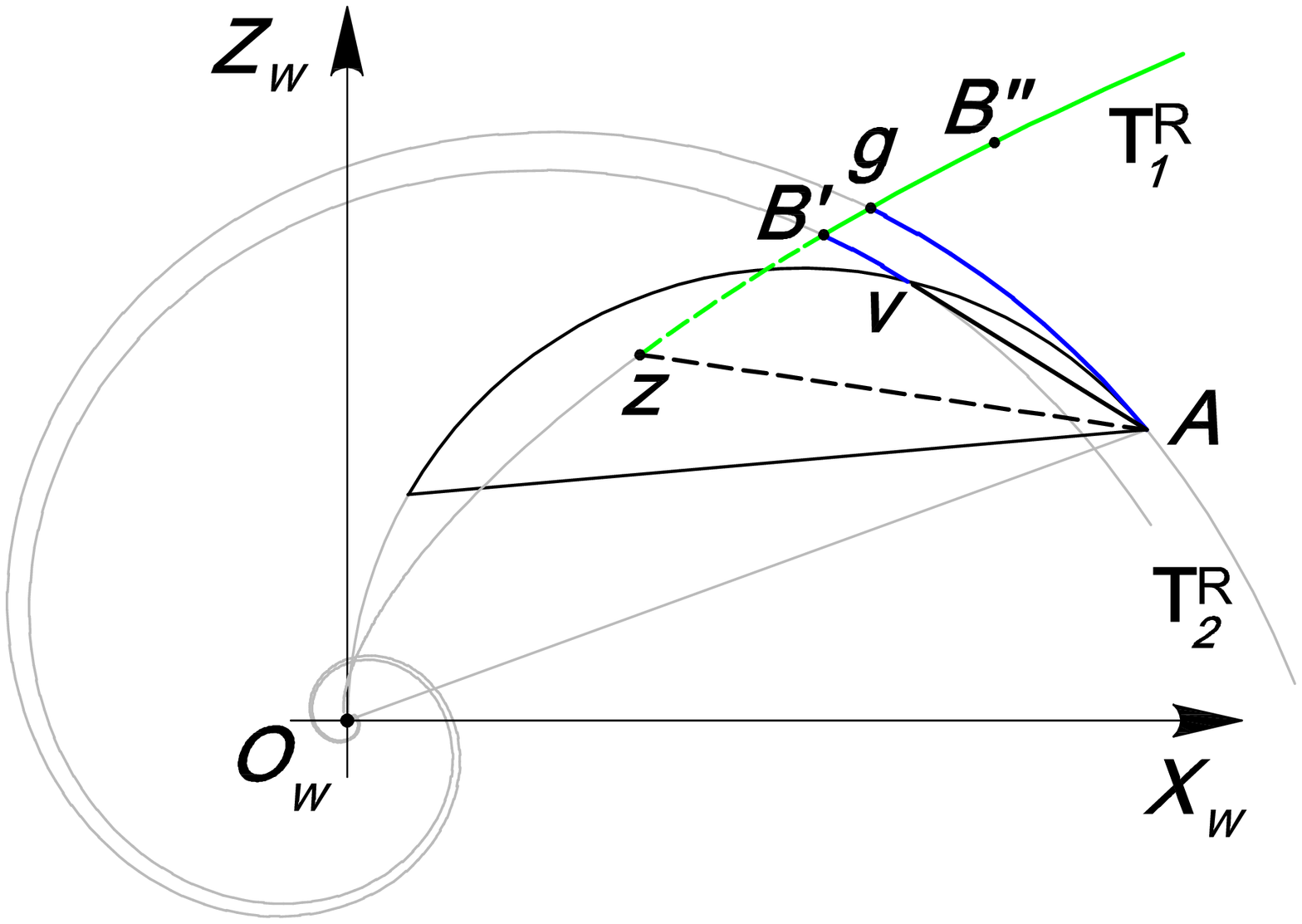}}
\;
\subfigure[From $A$ to $B$, path $S^+*T^{R+}_1$ through $\text{z}$ can be shortened by a path of type $T_1^{R+}S^+$ through $\text{v}$, where $S$ arc is tangent to $T_1^{R}$.]{\label{fig:P33}\includegraphics[width=0.35\textwidth]{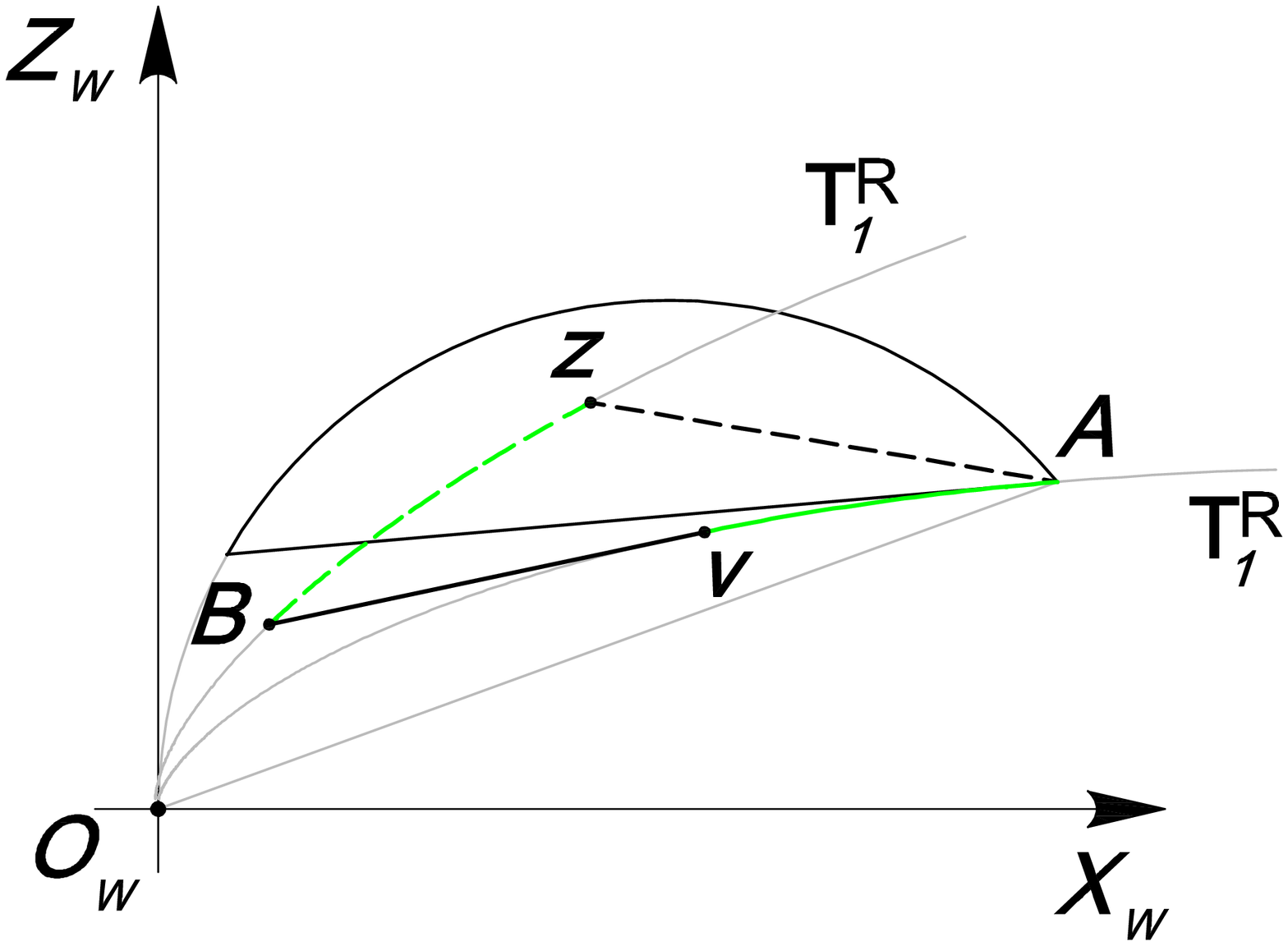}}
\;
\subfigure[From $A$ to $B^\prime$, path $S^+*T^{R-}_2$ through $\text{z}$ can be shortened by a path of type $T^{R+}_1S^+$ through $v$, whereas from $A$ to $B^{\prime\prime}$ by a path of type $T^{R+}_1*T^{R-}_2$ through $\text{g}$.]{\label{fig:P34}\includegraphics[width=0.4\textwidth]{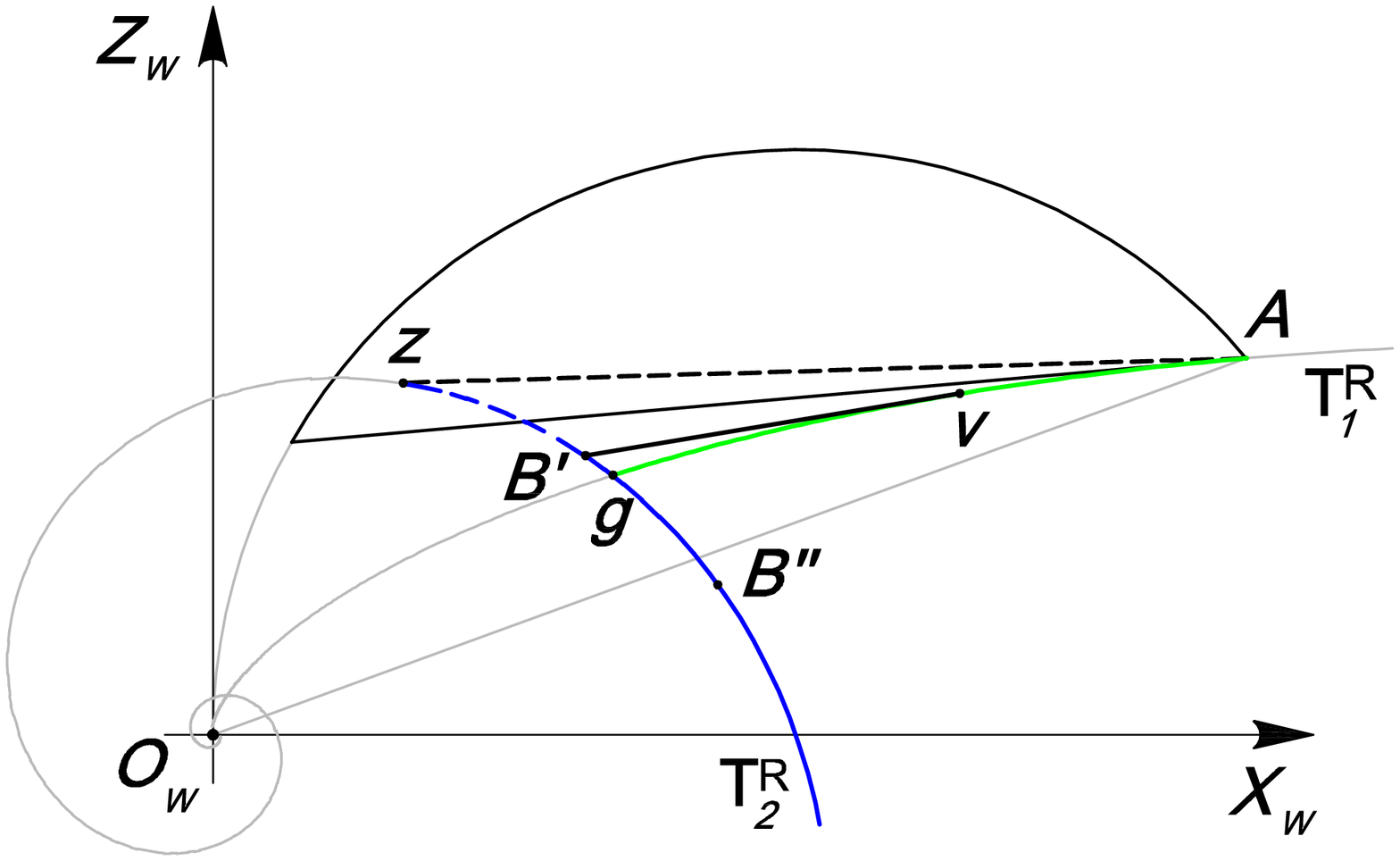}}
\caption{Examples of paths shortened in proposition~\ref{prop:FirstCaseS+} for the Side case.}
\label{fig:Prop3}
\end{figure*}

Proposition~\ref{prop:FirstCaseS+} implies that paths of type $S^-*E^-_1$ and $S^-*E^-_2$ are not optimal. Indeed, they can be shortened by $S^-E^-_1$ and $E^-_2S^-$, respectively (see fig.~\ref{fig:Prop3} for the Side case).

By using all previous results, a sufficient family of optimal paths is
obtained in the following important theorem.
\begin{theorem}
\label{th:CS}
For $\frac{\delta}{2}<\Gamma\leq \frac{\pi}{2}$, i.e. Side and Lateral cases, and for any $Q \in D(P)$ to $P$ there exists a shortest path of type $E^+_1*E^-_2S^-E^-_1$ or of type $E^+_1S^+E^+_2*E^-_1$.
For $0\leq \Gamma\leq\frac{\delta}{2}$, i.e. Frontal case, and for any $Q \in D(P)$ to $P$ there exists a shortest path of type $S^+E^+_1*E^-_2S^-$ or
of type $S^+E^+_2*E^-_1S^-$.
\end{theorem}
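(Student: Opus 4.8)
The plan is to start from an optimal path, which exists by Proposition~\ref{prop:existence} and, by Corollary~\ref{coroll:InNOTOut}, may be assumed to lie in $D(P)$, and then to peel off its structure with the forbidden-subword and shortening results proved above until only the listed words remain. The first reduction comes from Theorem~\ref{theo:rho_nomax}: no backward extremal may precede a forward one, so every optimal word factors as a forward block followed by a backward block, $W^{+}*W^{-}$, the switch being realized by the zero-length rotation on the spot. It therefore suffices to characterize the admissible forward and backward blocks separately and to glue them at the central $*$.

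Next I would pin down the internal structure of a block. Theorem~\ref{theo:EE} forbids two distinct spirals $E_1,E_2$ followed in the same direction, so inside a block the spiral arcs must be separated by straight segments $S$, making each block an alternating string in $S$, $E_1$, $E_2$. Proposition~\ref{prop:FirstCaseS+}, together with the observation below it that $S^{-}*E_1^{-}$ and $S^{-}*E_2^{-}$ shorten to $S^{-}E_1^{-}$ and $E_2^{-}S^{-}$, then fixes the placement of the straight segments relative to the spirals and removes every $*$ connecting an $S$ to a spiral (replacing it by a tangency). This collapses the forward block to the short tangentially connected pattern $E_1^{+}S^{+}E_2^{+}$ in the Side and Lateral cases and to $S^{+}E_1^{+}$ or $S^{+}E_2^{+}$ in the Frontal case, with $E_1,E_2$ interpreted as in fig.~\ref{fig:AllCases}, and symmetrically for the backward block.

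To avoid deriving both alternatives in each regime, I would exploit the $F_Q$ symmetry. For $Q\in C(P)$ the scale factor of $f_Q$ equals $\rho_P/\rho_Q=1$, so by Proposition~\ref{prop:length} the induced map $F_Q$ is a length-preserving bijection of $\mathcal{P}_Q$ onto $\mathcal{P}_{f_Q(P)}$ with $f_Q(P)\in C(P)$; hence the set of optimal words emanating from $C(P)$ is invariant under $F_Q$, which reverses a word and exchanges the superscripts $+$ and $-$ while preserving spiral type. A direct check shows that $E_1^{+}S^{+}E_2^{+}*E_1^{-}$ and $E_1^{+}*E_2^{-}S^{-}E_1^{-}$ are $F_Q$-images of one another, as are $S^{+}E_1^{+}*E_2^{-}S^{-}$ and $S^{+}E_2^{+}*E_1^{-}S^{-}$, so it is enough to establish one word of each pair. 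The statement for all of $D(P)$ then follows from the $C(P)$ case by the scaling property of $f_Q$ and the principle of optimality: optimal paths issuing from $C(P)$ stay in $D(P)$ and cover it, the alphabet is preserved under radial rescaling because spirals, half-lines and circles are self-similar, and the optimal tail from an interior point inherits the same word structure.

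The main obstacle is bounding the number of arcs in each block, that is, ruling out longer admissible alternations such as $E_1 S E_2 S E_1\cdots$, which none of the local forbidden-subword rules excludes on its own. Here I would combine the reachable-set geometry of Remarks~\ref{rem:FirstCaseEye}--\ref{rem:BorderEye} with the shortcut of Theorem~\ref{theo:InNOTOut}. Each region $SF(G)$ and $SB(G)$ is a thin lens bounded by two curves tangent to the spirals at its endpoints (and, in the Frontal and Side cases, contained in the disk of radius $\rho_G$), so the way one extremal can chain into the next is tightly constrained; any extra period of the alternation would either drive the path out of $D(P)$, contradicting Corollary~\ref{coroll:InNOTOut}, or produce two equal-radius points joined by an outward sub-arc, which Theorem~\ref{theo:InNOTOut} replaces by a strictly shorter one, contradicting optimality. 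Capping each block in this way and assembling the blocks through the central rotation leaves exactly $E_1^{+}*E_2^{-}S^{-}E_1^{-}$ or $E_1^{+}S^{+}E_2^{+}*E_1^{-}$ for the Side and Lateral cases, and $S^{+}E_1^{+}*E_2^{-}S^{-}$ or $S^{+}E_2^{+}*E_1^{-}S^{-}$ for the Frontal case.
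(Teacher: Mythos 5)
Your proposal follows the paper's own route in its first half: existence plus Corollary~\ref{coroll:InNOTOut}, Theorem~\ref{theo:rho_nomax} to factor any candidate into a forward block followed by a backward block, and the pairwise exclusions of Theorem~\ref{theo:EE} and Proposition~\ref{prop:FirstCaseS+} to constrain adjacencies; the paper packages exactly this as the transition graphs of fig.~\ref{fig:OptimalGraph} and reads off that at most $3$ switches survive. Your $F_Q$-pairing of the two candidate words is also correct. The genuine gap is your last paragraph, i.e.\ precisely the step you single out as the main obstacle. The mechanism you propose for killing an extra period such as $E_1^+S^+E_2^+S^+E_1^+$ cannot work: in the Frontal and Side cases the distance $\rho$ from $O_W$ is \emph{strictly monotone} along every forward (resp.\ backward) extremal arc (this is the very fact on which the proof of Theorem~\ref{theo:rho_nomax} rests), so within a single block there never exist two equal-radius points joined by an outward excursion --- the hypotheses of Theorem~\ref{theo:InNOTOut} are never met inside a block --- and the block remains inside the disk of radius $\rho_Q\leq\rho_P$, so it cannot leave $D(P)$ either; both horns of your dichotomy fail. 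In the Lateral case even your premises flip: $\cos\phi_2<0$, so $\rho$ increases along $E_2^+$ and, as the paper itself notes, $SF(G)$ is \emph{not} contained in the disk of radius $\rho_G$.

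The missing ingredient is that the local subword rules \emph{do} suffice once you close them under the $F_Q$ duality you already introduced: since $F_Q$ rescales all lengths by the same factor $\rho_P/\rho_Q$, every shortening of Proposition~\ref{prop:FirstCaseS+} transfers to its reversed, direction-flipped image. For instance $S^-{*}E_2^-\to E_2^-S^-$ dualizes to $E_2^+{*}S^+\to S^+E_2^+$, and $S^+{*}E_1^+\to E_1^+S^+$ dualizes to $E_1^-{*}S^-\to S^-E_1^-$. These dual rules impose the total order $E_1\preceq S\preceq E_2$ on forward blocks and $E_2\preceq S\preceq E_1$ on backward blocks (in the Side/Lateral interpretation of the symbols; the case-specific Frontal versions give the reversed placement of $S$, as in fig.~\ref{fig:OptimalGraph}b), so each block contains at most one arc of each type; in particular your alternation $E_1SE_2SE_1$ already contains the excluded pair $E_2^+S^+$. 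Combined with the mixed-direction cases of Proposition~\ref{prop:FirstCaseS+} and their duals, which rule out $S^+{*}E_i^-$ and $E_i^+{*}S^-$ at the junction, and the trivial exclusion of $E_i^+{*}E_i^-$ as a retracing, the only admissible junctions are $E_i^+{*}E_j^-$ with $i\neq j$, and the surviving words are exactly the four in the statement. This closure under duality is what the graph of fig.~\ref{fig:OptimalGraph} encodes, and with it your argument coincides with the paper's proof; without it, your finiteness step is unsupported.
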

\begin{proof}
According to all propositions above several concatenations of extremal have been proved to be non optimal.
Considering extremals as node and, possibly optimal, concatenations of extremal as edges of a graph, the sufficient optimal languages $\mathcal{L}_O$ from $Q$ in $D(P)$, for different values of $\Gamma$ and $\delta$, are described in fig.~\ref{fig:OptimalGraph}.
Indeed, it is straightforward to observe that the number of switches between extremals is finite and less or equal to 3, for any value of $\Gamma$ and $\delta$. Hence, the thesis.
\end{proof}
\begin{figure}[t]
\centering
  \begin{tabular}[c]{c}
  \begin{tabular}[c]{cc}
  \includegraphics[width=0.4\columnwidth,angle=90]{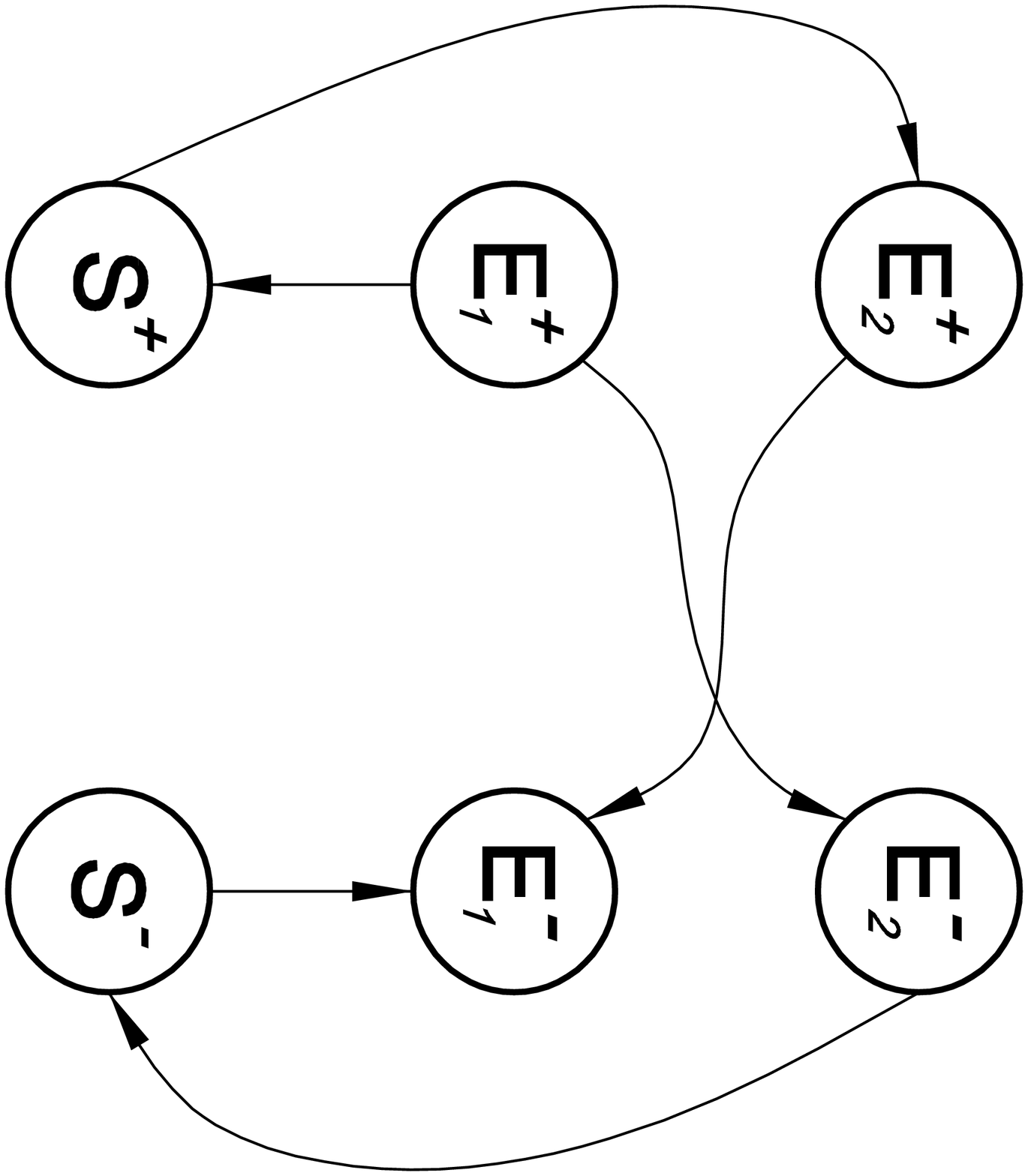} &
  \includegraphics[width=0.4\columnwidth,angle=90]{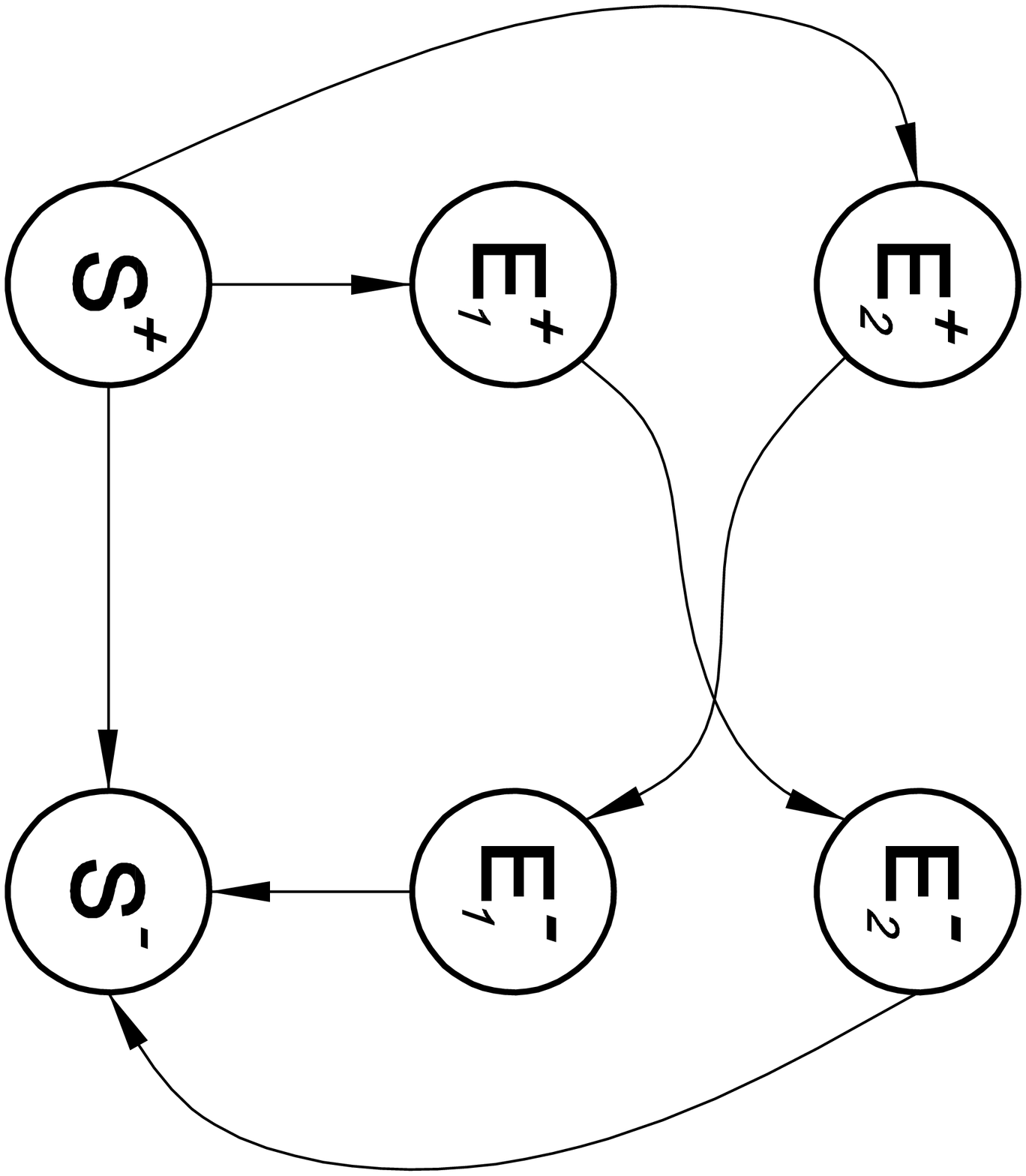} \\
  a) & b)
  \end{tabular}
  \end{tabular}
\caption{Feasible extremals and sequence of extremals from point in D(P): a) in Side and Lateral cases ($\frac{\delta}{2}<\Gamma\leq \frac{\pi}{2}$). b) in Frontal case ($0\leq \Gamma\leq\frac{\delta}{2}$).}
\label{fig:OptimalGraph}
\end{figure}

We now study the length of extremal paths from $C(P)$ to $P$ in the sufficient family above.

Without loss of generality, it is sufficient to study the length of extremal paths of type $E^+_1*E^-_2S^-E^-_1$ only from points $Q$ on the semicircle of $C(P)$ in the upper-half plane (denoted by $CS$). Indeed, up to a rotation, optimal paths of type $E^+_1S^+E^+_2*E^-_1$ from the
rest of $C(P)$ can be easily obtained.
Referring to fig.~\ref{fig:AnalysisOptimalPaths}, let the switching points of the optimal path be denoted by $N$, $M_1$ and $M_2$ or $\bar N$, $\bar M_1$ and $\bar M_2\equiv P$, respectively, depending on the angular values $\alpha_{M_1}$ or $\alpha_{\bar M_1}$. Moreover, in order to do the analysis, it is useful to parameterize the family by the angular value $\alpha_{\bar M_1}$ of the switching point $\bar M_1$ along the arc $C_2(P)$ between $P$ and $Z$ or the angular value $\alpha_{M_1}$ of the switching point $M_1$ along the extremal $E_1$ between $P_F$ and $O_W$.

\begin{theorem}
\label{theo:length}
For any point $Q\in CS$, the length of a path $\gamma\in \mathcal{P}_Q$ of type $E^+_1*E^-_2S^-E^-_1$ is:
\begin{itemize}
\item for $0\leq\alpha_{\bar M_1}\leq\phi_2-\phi_1$, i.e. from $P$ to $Z$ (notice that the last arc has zero length):
\small{
\begin{align}
L & = \rho_P\left\{\frac{\cos\alpha_{M_1}}{\cos\phi_2} + \frac{1}{\cos\phi_1} + \right. \nonumber\\
& -\left.\frac{\cos\phi_1+\cos\phi_2}{\cos\phi_1\cos\phi_2}\,\text{e}^{\left(\psi_Q-\alpha_{M_1}\right)\frac{t_1t_2}{t_2-t_1}}
\left(\frac{\sin\left(\phi_2-\alpha_{M_1}\right)}{\sin\phi_2}\right)^{-\frac{t_1}{t_2-t_1}}\right\}\,,\nonumber\\
&
\end{align}}

\item for $\alpha_{M_1}\geq\phi_2-\phi_1$, i.e. from $Z$ to $O_W$:

\small{
\begin{align}
L & =  \rho_P\left\{\frac{2}{\cos\phi_1} + \text{e}^{-\alpha_{M_1}t_1}\left[\frac{\cos(\phi_2-\phi_1)}{\cos\phi_2}- \frac{1}{\cos\phi_1}+ \right.\right.\nonumber\\
& \left.\left. -\frac{\cos\phi_1+\cos\phi_2}{\cos\phi_1\cos\phi_2}\text{e}^{[\psi_Q-(\phi_2-\phi_1)]\frac{t_1t_2}{t_2-t_1}}
\left(\frac{\sin\phi_1}{\sin\phi_2}\right)^{-\frac{t_1}{t_2-t_1}} \right]\right\}\,,\nonumber\\
&
\end{align}}
\end{itemize}
with $t_1=1/\tan\phi_1$ and $t_2=1/\tan\phi_2$.
\end{theorem}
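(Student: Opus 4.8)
The plan is to obtain $L$ by direct computation: decompose each path in the family into its constituent arcs, compute the length of every arc, sum them, and simplify. The only two length formulas needed are already available. A logarithmic spiral arc with characteristic angle $\phi_i$ spanning a radial interval of size $\Delta\rho$ has length $\Delta\rho/\cos\phi_i$ (the radial--projection formula already used in the bounds of Proposition~\ref{prop:existence}), while the straight arc $S^-$ has length equal to the Euclidean chord between its endpoints and the rotation $*$ contributes nothing. So the whole task reduces to locating the switching points as functions of the chosen parameter and then performing trigonometric bookkeeping.

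First I would fix the spiral parameterization. Placing $P=(\rho_P,0)$, a right spiral with characteristic angle $\phi_i$ obeys $\rho(\psi)=\rho_{\mathrm{ref}}\,\mathrm{e}^{(\psi-\psi_{\mathrm{ref}})t_i}$ with $t_i=\cot\phi_i$, so that the radius ratio between any two of its points is governed by their angular separation. I then parameterize the family by $\alpha:=\alpha_{M_1}$ (equivalently $\alpha_{\bar M_1}$), the angular coordinate of the switching point $M_1$ along $C_2(P)$ / along $E_1$, and locate the remaining switching points $N$ and $M_2$ as functions of $\alpha$ and the fixed data $\rho_P,\phi_1,\phi_2$. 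The decisive geometric inputs here are the tangency facts recorded as consequences of Remarks~\ref{rem:FirstCaseEye} and~\ref{rem:SecondCaseEye} (together with Remark~\ref{rem:BorderEye}): the straight arc $S^-$ joins tangentially to the spiral $E_2$ at $M_1$ and to the spiral $E_1$ at $M_2$. Tangency turns the triangle $O_W M_1 M_2$ into one whose angles are expressed through $\phi_1,\phi_2$ and $\alpha$, so the law of sines delivers the radii $\rho_{M_1},\rho_{M_2}$ and, in particular, the ratio $\sin(\phi_2-\alpha_{M_1})/\sin\phi_2$ appearing in the statement. Finally, $Q\in CS\subset C(P)$ forces $\rho_Q=\rho_P$, which ties $\psi_Q$ to $\alpha$ through the spiral equation of the initial arc $E_1^+$ and lets me replace the intermediate angle at $N$ by $\psi_Q$.

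With the switching points in hand I would sum the arc lengths: each spiral arc contributes the radial gap between its endpoints divided by $\cos\phi_1$ or $\cos\phi_2$, and $S^-$ contributes the chord $\overline{M_1M_2}$. Eliminating $\rho_N,\rho_{M_1},\rho_{M_2}$ in favour of $\alpha$ and $\psi_Q$ is where the exponent $\frac{t_1t_2}{t_2-t_1}$ is produced: it is precisely the combination left over when the common radius $\rho_N$, reached along $E_1$ at rate $t_1$ and along $E_2$ at rate $t_2$, is eliminated between the two spiral relations meeting at the switching point $N$. The two cases of the statement then correspond to whether the terminal $E_1^-$ arc survives: for $0\le\alpha_{\bar M_1}\le\phi_2-\phi_1$ the switching point $\bar M_2$ coincides with $P$, so that last arc degenerates to zero length and one obtains the first formula; for $\alpha_{M_1}\ge\phi_2-\phi_1$ the full four--arc path is traversed and one obtains the second; the boundary is the point $Z$ at $\alpha=\phi_2-\phi_1$.

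I expect the main obstacle to lie in this second and third step rather than in any conceptual difficulty. One must track signs and angular increments at $N$, $M_1$, $M_2$ carefully, since the arcs are run in mixed forward and backward directions, read off the triangle $O_W M_1 M_2$ correctly from the tangency conditions, and then verify that the accumulated prefactors collapse exactly to $\frac{\cos\phi_1+\cos\phi_2}{\cos\phi_1\cos\phi_2}$ and that the exponents combine into $\frac{t_1t_2}{t_2-t_1}$ with base $\bigl(\sin(\phi_2-\alpha_{M_1})/\sin\phi_2\bigr)^{-t_1/(t_2-t_1)}$. A useful consistency check on all of this bookkeeping is to confirm that the two expressions agree at $\alpha=\phi_2-\phi_1$, i.e.\ that $L$ is continuous across the point $Z$ separating the two regimes.
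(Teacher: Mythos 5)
Your plan is essentially the paper's own proof: the paper disposes of this theorem with the single sentence that the expression for $L$ ``is based on a direct computation,'' and the computation it intends is exactly the one you outline --- spiral arcs measured by the radial projection $\Delta\rho/\cos\phi_i$, the polar relation $\rho=\rho_{\mathrm{ref}}\,\mathrm{e}^{(\psi-\psi_{\mathrm{ref}})t_i}$, tangency of $S^-$ to the two spirals at its endpoints (the inscribed--angle condition defining the arcs $C_i$), the law of sines in the triangle $O_W M_1 M_2$, which yields $\rho_{\bar M_1}=\rho_P\sin(\phi_2-\alpha_{\bar M_1})/\sin\phi_2$ and the chord length $\rho_P\sin\alpha_{\bar M_1}/\sin\phi_2$, and elimination of the unknown coordinates of the intersection point $N$ between the two spiral relations, which is indeed the step that produces the exponent $\frac{t_1t_2}{t_2-t_1}$ and the power $-\frac{t_1}{t_2-t_1}$. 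Carried out, this bookkeeping reproduces both displayed expressions.

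One concrete warning, however, about a step you present as harmless: the identification ``$\alpha:=\alpha_{M_1}$ (equivalently $\alpha_{\bar M_1}$)'' is not correct, and your proposed consistency check will spuriously fail if you act on it. The two bullets use genuinely different parameters. In the first regime, $\alpha_{\bar M_1}$ is the angle at $O_W$ subtended by $\bar M_1$ on the circular arc $C_2(P)$ (the symbol $\alpha_{M_1}$ inside the first formula is the paper's own typo for $\alpha_{\bar M_1}$); in the second regime, $\alpha_{M_1}$ is the polar angle of $M_1$ along the spiral $E_1$ through $P$, entering only through $\rho_{M_1}=\rho_P\,\mathrm{e}^{-\alpha_{M_1}t_1}$. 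At the junction path through $Z$ these parameters take the values $\phi_2-\phi_1$ and $0$ respectively: the second formula evaluated at $\alpha_{M_1}=0$ coincides term by term with the first evaluated at $\alpha_{\bar M_1}=\phi_2-\phi_1$, whereas equating the two printed expressions at the common value $\phi_2-\phi_1$, as you propose, leaves a nonzero residue carrying the factor $1-\mathrm{e}^{-(\phi_2-\phi_1)t_1}$. (The paper's stated range $\alpha_{M_1}\geq\phi_2-\phi_1$ in the second bullet is part of this confusion: with the formula as printed the second family actually starts at $\alpha_{M_1}=0$, while with the stated range the formula would need $\mathrm{e}^{-(\alpha_{M_1}-(\phi_2-\phi_1))t_1}$ in place of $\mathrm{e}^{-\alpha_{M_1}t_1}$.) So carry out your derivation with two separate parameters and match the regimes at $Z$ via $\alpha_{\bar M_1}=\phi_2-\phi_1$, $\alpha_{M_1}=0$; with that correction your continuity test passes and the rest of the proposal goes through as planned.
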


\begin{figure}[t!]
\centering
\includegraphics[width=0.9\columnwidth]{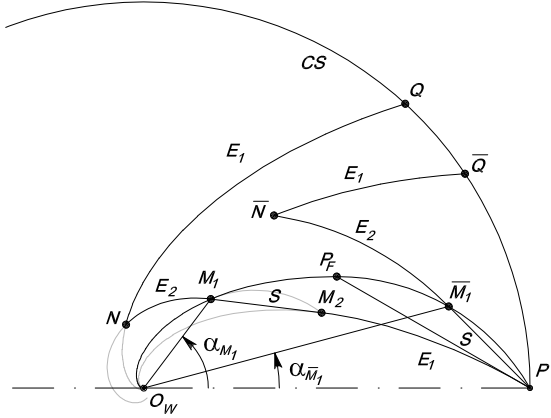}
\caption{Path of type $E^+_1*E^-_2S^-E^-_1$ or the degenerate case of type $E^+_1*E^-_2S^-$  from $Q\in CS$.}
\label{fig:AnalysisOptimalPaths}
\end{figure}

The analytical expression for the length $L$ is based on a direct computation. Having the path's length as a function of two parameters $\alpha_{M_1}$ or $\alpha_{\bar M_1}$ and $\psi_Q$, we are now in a position to minimize the length within the sufficient family.
\begin{theorem}\label{theo:optimalCS}
Given a point $Q\in CS$,
\begin{itemize}
\item for $0\leq\psi_Q\leq\psi_{R_1}:=\frac{\sin(\phi_2-\phi_1)}{\cos\phi_1\cos\phi_2}
    \ln\left(\frac{\cos\phi_1+\cos\phi_2}{\sin\phi_2\sin(\phi_2-\phi_1)}\right)$, optimal
    path is of type $E^+_1*E^-_{2}$;
\item for $\psi_{R_1}\leq\psi_Q\leq\psi_{R_2}$ with $\psi_{R_2}:=(\phi_2-\phi_1)+\psi_{R_1}+\tan\phi_2
    \ln\left(\frac{\sin\phi_1}{\sin\phi_2}\right)$, optimal path is of type $E^+_1*E^-_2S^-$;
\item for $\psi_{R_2}\leq\psi_Q\leq\pi$ the optimal path is $E^+_1*E^-_{1}$ through $O_W$.
\end{itemize}
Moreover, for $\psi_Q=\psi_{R_2}$, any optimal path of type $E^{+}_1*E^{-}_2S^-E^{-}_1$ turns out to have the same
length $\ell$ of optimal path $E^{+}_1*E^{-}_{1}$. Hence, for $\psi_Q=\psi_{R_2}$ also $E^{+}_1*E^{-}_2S^-E^{-}_1$ is optimal.
\end{theorem}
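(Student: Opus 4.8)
The plan is to treat the two length expressions of Theorem~\ref{theo:length} as a single function $L(\alpha;\psi_Q)$ of the switching parameter $\alpha$ (the angular position of $M_1$ on $E_1$, resp.\ $\bar M_1$ on $C_2(P)$), defined piecewise on $[0,\phi_2-\phi_1]$ and on $[\phi_2-\phi_1,\infty)$, and to minimise it over $\alpha$ for each fixed $\psi_Q$. The three boundary behaviours of this family are exactly the three candidate optima: $\alpha=0$ is the path $E^+_1*E^-_2$ (both $S^-$ and the trailing $E^-_1$ vanish), the interior of the first branch is $E^+_1*E^-_2S^-$, and the limit $\alpha\to\infty$ is the through-$O_W$ path $E^+_1*E^-_1$ of length $2\rho_P/\cos\phi_1$. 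Writing $t_i=\cot\phi_i$, $k=t_1t_2/(t_2-t_1)$, $m=t_1/(t_2-t_1)$ and $C=\tfrac1{\cos\phi_1}+\tfrac1{\cos\phi_2}$ (note $k,m<0$ since $\phi_1<\phi_2$), the whole argument becomes a one-variable calculus study of $L$ plus a single length comparison; the partition into three $\psi_Q$-intervals then follows by tracking where the minimiser sits.

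First I would pin down $\psi_{R_1}$. Differentiating the first branch and using the identity $m\cot\phi_2=k$ (equivalently $m/k=\tan\phi_2$), both contributions to $\partial_\alpha L$ vanish at $\alpha=0$, so $E^+_1*E^-_2$ is a stationary configuration for \emph{every} $\psi_Q$. Its nature is settled by the second-order test: a direct computation gives $\partial^2_\alpha L|_{\alpha=0}=\rho_P\bigl(-\tfrac1{\cos\phi_2}+\tfrac{C|m|}{\sin^2\phi_2}\,e^{\psi_Q k}\bigr)$, which, since $k<0$, strictly decreases in $\psi_Q$ and changes sign at a unique value. Solving $\partial^2_\alpha L|_{\alpha=0}=0$ and simplifying with $\tfrac1{\cos\phi_1}+\tfrac1{\cos\phi_2}=C$ and $\cot\phi_1-\cot\phi_2=\tfrac{\sin(\phi_2-\phi_1)}{\sin\phi_1\sin\phi_2}$ reproduces exactly the stated $\psi_{R_1}$. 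Hence for $\psi_Q\le\psi_{R_1}$ the boundary $\alpha=0$ is a local (and, by the unimodality argument below, global) minimum so that $E^+_1*E^-_2$ is optimal, while for $\psi_Q>\psi_{R_1}$ it becomes a maximum and the minimiser moves into the interior, yielding $E^+_1*E^-_2S^-$.

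Next I would locate $\psi_{R_2}$ and prove the equal-length claim at the same time. For $\psi_Q>\psi_{R_1}$ let $\alpha^\ast(\psi_Q)$ be the interior root of $\partial_\alpha L=0$; implicit differentiation shows $\alpha^\ast$ increases with $\psi_Q$ and that $L(\alpha^\ast)$ rises toward $2\rho_P/\cos\phi_1$. The threshold is the value of $\psi_Q$ at which this interior optimum first ties the through-$O_W$ path, i.e.\ where $\partial_\alpha L=0$ and $L=2\rho_P/\cos\phi_1$ hold together. Eliminating the common exponential--power term between these two equations removes $\psi_Q$ and leaves a relation in $\alpha$ alone; using $k-m\cot\phi_1=t_1$, this relation is satisfied precisely at $\alpha=\phi_2-\phi_1$, i.e.\ the switch sits at the junction $Z$. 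Back-substitution then fixes $\psi_Q=\psi_{R_2}$ and shows that there the optimal $E^+_1*E^-_2S^-E^-_1$ and the path $E^+_1*E^-_1$ share the common length $\ell=2\rho_P/\cos\phi_1$, which is the final assertion; for $\psi_Q>\psi_{R_2}$ the interior value exceeds $2\rho_P/\cos\phi_1$, so $E^+_1*E^-_1$ is strictly shorter and becomes the optimum.

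The main obstacle is not any single derivative but the global bookkeeping. I must establish that $L(\cdot\,;\psi_Q)$ has at most one interior stationary point, so that the boundary second-order test and the single length comparison above really capture the \emph{global} minimum rather than a local one; this requires a sign analysis of $\partial_\alpha L$ across the whole parameter range. Closely tied to it is the need to match the two angular parametrisations $\alpha_{M_1}$ and $\alpha_{\bar M_1}$ consistently at $Z$, so that the two branches of Theorem~\ref{theo:length} glue into one $C^1$ function of $\alpha$; this is where the trigonometric identities for $k$, $m$ and $C$ do the real work, and where a careless choice of variable would shift the computed thresholds. Everything else reduces to routine differentiation and simplification.
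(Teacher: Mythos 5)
Your proposal is correct and takes essentially the same route as the paper, whose entire published justification is the single remark that the results ``have been obtained computing first and second derivatives of $L$ and nonlinear minimization techniques'': your one-variable study of $L(\alpha;\psi_Q)$ supplies exactly the omitted detail, and the key steps check out --- $\partial_\alpha L|_{\alpha=0}=0$ for every $\psi_Q$ via $m\cot\phi_2=k$, the second-derivative sign change (monotone in $\psi_Q$ since $k<0$) reproducing $\psi_{R_1}$, and the simultaneous conditions $\partial_\alpha L=0$, $L=2\rho_P/\cos\phi_1$ being met precisely at the junction $\alpha=\phi_2-\phi_1$ via $k-m\cot\phi_1=t_1$, with the flatness of the second branch there ($L\equiv 2\rho_P/\cos\phi_1$) yielding the ``Moreover'' equal-length clause. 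One caveat worth recording: if you actually complete the back-substitution at $\alpha=\phi_2-\phi_1$, the coefficient of $\ln\left(\frac{\sin\phi_1}{\sin\phi_2}\right)$ comes out as $\tan\phi_1$ (since $\frac{1}{k}+\tan\phi_2=\tan\phi_1$), not the printed $\tan\phi_2$ --- this agrees with the paper's own Borderline Side specialization $\Psi_{R_2}=\frac{\pi}{2}-\phi_1+\Psi_{R_1}+\tan\phi_1\ln(\sin\phi_1)$, so it exposes a typo in the theorem statement rather than a flaw in your argument, but as written your claim that back-substitution ``fixes $\psi_Q=\psi_{R_2}$'' would not match the printed formula literally.
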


Previous results have been obtained computing first and second derivatives of $L$ and nonlinear minimization techniques.

We are now interested in determining the locus of switching points
between extremals in optimal paths.

\begin{proposition}\label{prop:M_0}
For $Q\in CS$ with $0 < \psi_Q \leq \psi_{R_1} $, the switching locus
is the arc of $E_2$ between $P$
$M=(\rho_P\frac{\sin\phi_2\sin(\phi_2-\phi_1)}{\cos\phi_1+\cos\phi_2},\,\psi_M)$ (included), where $\psi_M=\tan\phi_2\ln\left(\frac{\rho_P}{\rho_M}\right)$.
\end{proposition}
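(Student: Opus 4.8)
The plan is to lean on Theorem~\ref{theo:optimalCS}, which guarantees that for $0<\psi_Q\leq\psi_{R_1}$ the optimal word from $Q\in CS$ collapses to $E_1^+*E_2^-$, and therefore carries a single switching point $N$. First I would observe that the terminal arc $E_2^-$ reaches $P$, so it must lie on the unique $\phi_2$--spiral through $P$, $E_{2P}:\rho=\rho_P\,\mathrm{e}^{-\psi t_2}$, while the initial arc $E_1^+$ leaves $Q=(\rho_P,\psi_Q)$ and hence lies on $E_{1Q}:\rho=\rho_P\,\mathrm{e}^{-(\psi-\psi_Q)t_1}$; the spiral law $\frac{d\rho}{d\psi}=-\rho\cot\phi_i=-\rho\,t_i$ follows from imposing $\beta\equiv\phi_i$ in~\eqref{eq:PolarVehicle}. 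Consequently $N=E_{1Q}\cap E_{2P}$ is \emph{uniquely determined} by $Q$ and the whole switching locus lies on the fixed spiral $E_{2P}$; it remains only to identify which arc of $E_{2P}$ is swept as $\psi_Q$ varies.

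Next I would compute the intersection explicitly. Equating the two spiral equations gives $\psi_N=\frac{t_1}{t_1-t_2}\,\psi_Q$ and $\rho_N=\rho_P\,\mathrm{e}^{-\psi_N t_2}$. Throughout the Side and Lateral regime one has $\phi_1\in(0,\frac{\pi}{2})$ and $\phi_2\in(0,\pi)$, so $t_1=\cot\phi_1>0$ and $t_1-t_2=\frac{\sin(\phi_2-\phi_1)}{\sin\phi_1\sin\phi_2}>0$; hence the coefficient $\frac{t_1}{t_1-t_2}$ is positive and $\psi_N$ depends linearly, thus \emph{monotonically}, on $\psi_Q$. As $\psi_Q\to 0^+$ we get $N\to P$, so the locus is a \emph{connected} arc of $E_{2P}$ issuing from $P$; moreover $\rho_N<\rho_P$ keeps $N$ inside $D(P)$, consistently with Corollary~\ref{coroll:InNOTOut}.

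Finally I would locate the far endpoint by setting $\psi_Q=\psi_{R_1}$. Using $t_1-t_2=\frac{\sin(\phi_2-\phi_1)}{\sin\phi_1\sin\phi_2}$ one obtains $\frac{t_1}{t_1-t_2}=\frac{\cos\phi_1\sin\phi_2}{\sin(\phi_2-\phi_1)}$, and substituting the definition of $\psi_{R_1}$ the factor $\frac{\sin(\phi_2-\phi_1)}{\cos\phi_1\cos\phi_2}$ cancels, leaving $\psi_M=\tan\phi_2\,\ln\!\left(\frac{\cos\phi_1+\cos\phi_2}{\sin\phi_2\sin(\phi_2-\phi_1)}\right)$. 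Since $M\in E_{2P}$ this is exactly $\psi_M=\tan\phi_2\ln(\rho_P/\rho_M)$ with $\rho_M=\rho_P\,\mathrm{e}^{-\psi_M t_2}=\rho_P\,\frac{\sin\phi_2\sin(\phi_2-\phi_1)}{\cos\phi_1+\cos\phi_2}$, i.e.\ the stated coordinates of $M$, which is attained (hence included) at $\psi_Q=\psi_{R_1}$.

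I expect the \emph{main obstacle} to be the reduction in the first step: arguing that in the word $E_1^+*E_2^-$ the switching point has no residual degree of freedom, so that the locus is forced onto $E_{2P}$. This rests entirely on the degeneracy established in Theorem~\ref{theo:optimalCS} for $\psi_Q\le\psi_{R_1}$; once it is invoked, the remaining spiral--intersection computation and trigonometric cancellation above are routine.
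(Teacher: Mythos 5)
Your proposal is correct and takes essentially the same route as the paper: invoke Theorem~\ref{theo:optimalCS} to reduce the optimal word to $E_1^+*E_2^-$, place the switching point at the intersection of the $E_1$ spiral through $Q$ with the $E_2$ spiral through $P$, and identify the endpoint $M$ at $\psi_Q=\psi_{R_1}$. The paper's proof is a terse two sentences, and your explicit spiral-intersection formula $\psi_N=\frac{t_1}{t_1-t_2}\psi_Q$, the monotonicity/connectedness check, and the trigonometric verification of the coordinates of $M$ merely supply details the paper leaves implicit.
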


\begin{proof}
From Theorem~\ref{theo:optimalCS}, the optimal path from $Q\in CS$ to $P$ is
of type $E^+_1*E^-_{2}$. For $\psi_Q=\psi_{R_1}$ the intersection between $E^+_1$ and $E^-_{2}$  is $M$.
\end{proof}

\begin{proposition}\label{prop:M_1}
For $Q\in CS$ with $\psi_{R_1} < \psi_Q <\psi_{R_2}$, the loci of
switching points $M_2$ and $N$ are the $\partial SF_2(P)$ and $\partial SF_2(M)$.
\end{proposition}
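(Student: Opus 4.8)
The plan is to use Theorem~\ref{theo:optimalCS}: for $\psi_{R_1}<\psi_Q<\psi_{R_2}$ the optimal path is of type $E^+_1*E^-_2S^-$, so its two switching points are $N$ (the $E^+_1$--$E^-_2$ junction, where the rotation $*$ sits) and $M_2$ (the $E^-_2$--$S^-$ junction), while the terminal $E^-_1$ has degenerated to zero length ($\bar M_2\equiv P$). I would then treat the two loci separately, first $M_2$ and then $N$, the latter through the similarity centred at $O_W$ that carries $P$ to the point $M$ of Proposition~\ref{prop:M_0}.

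For $M_2$ the argument is short. The terminal $S^-$ is a feasible backward straight arc ending at $P$; through the time--reversal built into $F_Q$ it corresponds to a \emph{forward} straight arc from $P$ to $M_2$ with unchanged headings, hence with the same $\beta$ along the way, so it is SR--feasible, and being part of an optimal path it is itself optimal. By Remark~\ref{rem:BorderEye} an optimal forward straight arc from $P$ must terminate on $\partial SF_2(P)=C_{P_F}$, so $M_2\in\partial SF_2(P)$. To obtain the \emph{whole} arc I would track the endpoints of the range: at $\psi_Q=\psi_{R_1}$ the $S^-$ arc has zero length, so $M_2=P$, while at $\psi_Q=\psi_{R_2}$ (where $\alpha_{\bar M_1}=\phi_2-\phi_1$) the junction reaches $Z=P_F$; continuity of the optimal synthesis in $\psi_Q$ then makes $M_2$ sweep all of $C_{P_F}=\partial SF_2(P)$.

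For $N$ I would introduce the similarity $g$, a rotation by $\psi_M$ and scaling by $\rho_P/\rho_Q$-type factor $\rho_M/\rho_P$ centred at $O_W$, that sends $P\mapsto M$. Two facts make $g$ the right tool. It is a rotation plus scaling of the entire SR construction, hence preserves the alphabet and maps $\partial SF_2(P)$ onto $\partial SF_2(M)$; and, since $\psi_M=\tan\phi_2\ln(\rho_P/\rho_M)$, it is exactly the self--similarity that slides every type--$2$ logarithmic spiral along itself by the angular amount $\psi_M$ (one checks $\psi+\tan\phi_2\ln\rho$ is left invariant). Consequently $g(M_2)$ lies simultaneously on $\partial SF_2(M)$ and on the very spiral, call it $\sigma_N$, that carries the $E^-_2$ arc through $M_2$ and $N$. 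The claim $N\in\partial SF_2(M)$ therefore reduces to the single identity $N=g(M_2)$, i.e.\ to showing that the $E^-_2$ arc from $M_2$ to $N$ subtends exactly the angle $\psi_M$ (equivalently that the tangent to $\sigma_N$ at $N$ passes through $M$, so $N\in C_2(M)$, mirroring the $S^-$ tangency that places $M_2$ on $C_2(P)$).

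The main obstacle is precisely this interior identity: the angular extent of the $E^-_2$ arc is a priori a free parameter, so its constancy at the value $\psi_M$ is not a purely geometric fact and must be forced by optimality. I would close it in two steps. First, verify the identity at the two ends: at $\psi_{R_1}$ one has $M_2=P$ and $N=M=g(P)$, and at $\psi_{R_2}$ a direct computation gives $g(P_F)=(\rho_M\tfrac{\sin\phi_1}{\sin\phi_2},\,\psi_M+\phi_2-\phi_1)=M_F$, so again $N=M_F=g(M_2)$; these are the two endpoints of $C_{M_F}$. Second, for interior $\psi_Q$ I would invoke the first--order stationarity $\partial L/\partial\alpha_{\bar M_1}=0$ underlying Theorem~\ref{theo:optimalCS} (from the explicit length of Theorem~\ref{theo:length}) to show that the optimal switch makes the $E^-_2$ sweep equal to $\psi_M$; equivalently, since $g^{-1}(N)$ already lies on $\sigma_N\cap C_2(P)$, one argues uniqueness of that intersection on the relevant branch to conclude $g^{-1}(N)=M_2$. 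This pins $N$ to $g(M_2)\in\partial SF_2(M)$, and letting $\psi_Q$ range over $(\psi_{R_1},\psi_{R_2})$ sweeps $N$ over the whole of $\partial SF_2(M)$, which together with the $M_2$ part completes the proof.
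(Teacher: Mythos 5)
Your proposal is correct in substance, but it takes a genuinely more geometric route than the paper, whose entire proof is a two-line substitution: the paper takes the optimal switching values $\alpha_{M_2}$ produced by the computations behind Theorem~\ref{theo:optimalCS}, plugs them into the coordinates of $M_2$ to get $M_2\in\partial SF_2(P)$, and then plugs them into the equation of the intersection $N$ of $E_1$ through $Q$ with $E_2$ through $M_2$ to get $N\in\partial SF_2(M)$. You replace the first verification by a computation-free argument --- time reversal of the optimal $E^+_1*E^-_2S^-$ path yields an optimal $S^+E^+_2*E^-_1$ path from $P$, whose initial forward straight arc must end on $C_{P_F}=\partial SF_2(P)$ by Remark~\ref{rem:BorderEye} --- and this use is legitimate provided the remark is read, as intended, for straight arcs terminated by a switch to a spiral (a terminal $S$ arc ending at the goal need not touch the border; here your $S^+$ is followed by $E^+_2$, so the tangency content applies). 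For $N$, your similarity $g$ is an elegant device: since $\psi_M=\tan\phi_2\ln\left(\rho_P/\rho_M\right)$ and $M$ lies on the $E_2$ spiral through $P$ (Proposition~\ref{prop:M_0}), $g$ slides every $T_2$ spiral along itself and maps $\partial SF_2(P)$ onto $\partial SF_2(M)$, reducing the claim to the single identity $N=g(M_2)$. However, to establish that identity at interior values of $\psi_Q$ you fall back on the stationarity $\partial L/\partial\alpha_{\bar M_1}=0$ of the length in Theorem~\ref{theo:length} --- i.e., on precisely the computations of Theorem~\ref{theo:optimalCS} that constitute the whole of the paper's proof --- so the two arguments share the same analytic core and your plan is no less rigorous than the published one. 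What your route buys is threefold: the $M_2$ half needs no computation at all; only one scalar identity (the $E^-_2$ sweep equals $\psi_M$) must be extracted from the length formulas instead of two coordinate checks; and your endpoint-plus-continuity argument ($M_2$ sweeping from $P$ to $P_F=Z$, $N$ from $M$ to $M_F$) actually shows the loci are the \emph{entire} arcs $\partial SF_2(P)$ and $\partial SF_2(M)$, a surjectivity point that the paper's proof leaves implicit.
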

\begin{proof}
For $Q\in CS$ with $\psi_{R_1} < \psi_Q <\psi_{R_2}$, considering the values of $\alpha_{M_2}$ obtained in the computations of Theorem~\ref{theo:optimalCS} we obtain $M_2\in \partial SF_2(P)$. Furthermore, substituting those values in the equation of the intersection point $N$ between $E_1$ through $Q$ and $E_2$ through $M_2$ we obtain $N\in \partial SF_2(M)$.
\end{proof}
Finally, for $Q\in CS$ with $\psi_{R_2}\leq \psi
<\pi$, the switching locus reduces to the origin $O_W$ since two extremal $E_i$ intersect only in the origin for $i=1,2$.

\section{Shortest paths from any point in the motion plane}
The synthesis on $C(P)$ induce a partition in regions of $D(P)$. Indeed, for any {$Q\in D(P)$}, there exists a point $V\in C(P)$ such that the optimal path $\gamma$ from $V$ to $P$ goes through $Q$. The Bellmann's optimality principle ensure the optimality of the sub--path from $Q$ to $P$. Based on this construction the partition of $C(P)$ is reported in fig.~\ref{fig:SubdivisionInCP}. 

\begin{figure}[t!]
\begin{minipage}[t]{0.4\columnwidth}\centering
\scriptsize{
\begin{tabular}{|c|l|} \hline
Region & Optimal Path \\
\hline \hline
I & $S^-$\\ \hline
II & $E_1^{+}*E_2^-$\\ \hline
II$^\prime$ & $E_2^+*E_1^-$\\ \hline
III & $E_1^+*E_1^-$  \\ \hline
IV & $E_2^-S^-E_1^-$ \\\hline
V & $E_1^+*E_2^-S^-$\\ \hline
V$^\prime$  & $S^+E_2^+*E_1^-$\\ \hline
VI &  $S^-E_1^-$\\ \hline
\end{tabular}}
\end{minipage}
\hspace{-1.2cm}
\begin{minipage}[b]{0.9\columnwidth}
\centering
\includegraphics[width=0.7\columnwidth,angle=90]{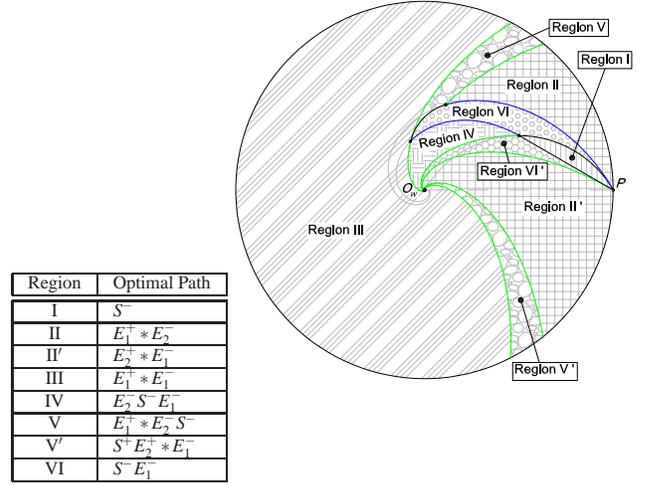}
\end{minipage}
\caption{Optimal synthesis inside $D(P)$.}
\label{fig:SubdivisionInCP}
\end{figure}

For points outside $C(P)$, function $F_Q$ has been defined in~\ref{eq:F} in order to transform paths starting from $Q$ inside $C(P)$ in paths starting from $f_Q(P)=\left(\frac{\rho_P^2}{\rho_Q},-\psi_Q \right)$ outside $C(P)$.

From other properties of $F_Q$, such as Proposition~\ref{prop:length}, we have also that an optimal path is mapped into an optimal path. Hence, the optimal synthesis from points outside $C(P)$ can be easily obtained mapping through map $F_Q$ all borders of regions inside $C(P)$.

\begin{proposition}
\label{prop:borders}
Given a border $\mathbf{B}$ and $Q\in \mathbf{B}$ map $F_Q$ transforms:
\begin{enumerate}
\item $\mathbf{B}=C(P)$ into itself;
\item $\mathbf{B}=\partial SF_2(Q)$ in $\partial SB_1(f_Q(P))$
\item $\mathbf{B}=\partial SF_1(Q)$ in $\partial SB_2(f_Q(P))$
\item $\mathbf{B}=E_i$ in arcs of the same type ($i=1,2$)
\end{enumerate}
\end{proposition}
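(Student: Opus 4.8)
The plan is to exploit that $f_Q$ in~\eqref{eq:f} is an orientation--preserving similarity of the plane centered at the feature $O_W$: a rotation by $-\psi_Q$ composed with a scaling by $\rho_P/\rho_Q$, sending $Q$ to $P$. Such a map fixes $O_W$, sends straight lines to straight lines and circles centered at $O_W$ to circles centered at $O_W$, and, being conformal, preserves the angle $\beta$ that the heading makes with the direction to $O_W$; hence it leaves the sensor--range constraints~\eqref{eq:S_R} and~\eqref{eq:S_L} invariant and maps the set of feasible forward--straight motions at a point onto the feasible forward--straight motions at its image. The transform $F_Q$ in~\eqref{eq:F} is $f_Q$ followed by a reversal of the time parameter; as already noted, this turns forward arcs into backward arcs (and conversely) and reverses the order in which the two sensor borders $\phi_1,\phi_2$ are encountered. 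Since reversing the direction of travel exchanges the left and right half--planes as seen from the moving vehicle, and the indices $1,2$ are attached to the right ($\phi_1$) and left ($\phi_2$) sensor borders, this reversal swaps the two border indices $1\leftrightarrow 2$. These three facts --- similarity--equivariance of the constrained reachable sets, the forward/backward swap, and the $1\leftrightarrow 2$ index swap --- will drive all four items.

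Items~1 and~4 are immediate. For item~1, $Q\in\mathbf{B}=C(P)$ forces $\rho_Q=\rho_P$, so the scaling factor $\rho_P/\rho_Q$ equals $1$ and $f_Q$ reduces to a rotation about $O_W$; a circle centered at $O_W$ is therefore mapped onto itself, and the time reversal does not change the underlying point set, giving $F_Q(C(P))=C(P)$. For item~4, logarithmic spirals of a fixed characteristic angle are self--similar under rotation and scaling, so $f_Q(E_i)$ is again a spiral with the same characteristic angle $\phi_i$, i.e. an arc of the same type $E_i$; the time reversal only flips the superscript $+\leftrightarrow -$. The degenerate cases are handled identically: a half--line $H$ through $O_W$ maps to a half--line through $O_W$ (because $f_Q$ fixes $O_W$ and preserves lines) and the circle $C$ centered at $O_W$ maps to a circle centered at $O_W$. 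This reproduces, at the level of border curves, the already stated fact that $F_Q$ carries an extremal of $\mathcal{A}$ into an extremal of the same type traversed backward.

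For items~2 and~3 I would argue as follows. Applying $f_Q$ alone to the forward--straight region from $Q$ and using its equivariance together with $f_Q(Q)=P$ identifies $f_Q(\partial SF_i(Q))$ as a forward--straight border at the image base point. The additional time reversal then reinterprets each forward--straight segment as a backward--straight one and, by the left/right exchange above, turns border $i$ into border $3-i$ of a backward--straight region; identifying the resulting base point with $f_Q(P)$ from the definition~\eqref{eq:F} yields $F_Q(\partial SF_2(Q))=\partial SB_1(f_Q(P))$ and $F_Q(\partial SF_1(Q))=\partial SB_2(f_Q(P))$. To make the index correspondence and the base point rigorous rather than heuristic, I would verify them directly on the explicit parameterizations: the arc/chord description of $\partial SF_1,\partial SF_2$ and the rotation--and--scale identity $\partial SB_i(G)=\partial SF_i(G_B)$ recorded in Remark~\ref{rem:SecondCaseEye} for the Side and Lateral cases, and the companion description in Remark~\ref{rem:FirstCaseEye} for the Frontal case, checking that the image curve computed through $f_Q$ coincides, arc for arc and endpoint for endpoint, with $\partial SB_{3-i}(f_Q(P))$.

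The main obstacle is exactly this last bookkeeping in items~2 and~3: keeping straight, simultaneously, the forward/backward swap, the $1\leftrightarrow 2$ index swap, and the relocation of the base point from $Q$ to $f_Q(P)$, and confirming that these three effects compose to the precise identities claimed rather than to some off--by--one variant. Because the meaning of $\partial SF_i$ and $\partial SB_i$ (arc versus chord, and which of $\phi_1,\phi_2$ is active) changes across the Frontal, Side, Borderline and Lateral configurations of Fig.~\ref{fig:AllCases}, the cleanest route is to reduce all configurations to the invariant statement in terms of the active constraint~\eqref{eq:S_R}/\eqref{eq:S_L} and the sign of $\nu$, so that a single similarity--plus--reversal argument settles every case at once; the per--case explicit check then serves only as confirmation.
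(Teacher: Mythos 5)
Your items 1 and 4 reach correct conclusions, but your treatment of items 2 and 3 rests on a misreading of how $F_Q$ acts on a border, and this is the heart of the proposition. When the paper ``maps a border $\mathbf{B}$ through $F_Q$'' it does not apply the fixed similarity $f_Q$ (for one chosen $Q$) to the point set $\mathbf{B}$: since $F_Q$ carries a path from $Q$ into a path starting at $f_Q(P)$, the induced action on a locus of starting points is \emph{pointwise}, $Q'\mapsto f_{Q'}(P)=\left(\frac{\rho_P^2}{\rho_{Q'}},-\psi_{Q'}\right)$, i.e.\ each point of $\mathbf{B}$ is moved by its own map. The composite point map is the inversion with respect to the circle $C(P)$ composed with the reflection about the $X_W$ axis --- an anti-conformal map, not a similarity. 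This distinction is not cosmetic: in the Frontal case $\partial SF_2(Q)$ is an arc of a circle through $O_W$ (Remark~\ref{rem:FirstCaseEye}), while $\partial SB_1(f_Q(P))$ is a half-line $r_1(f_Q(P))$; no similarity can carry one onto the other, whereas inversion centered at $O_W$ maps circles through the center precisely onto straight lines (and, in the Side case, maps the chord $\partial SF_1$ onto a circular arc through $O_W$, as items 2--3 require). Under your reading the claim is simply false: equivariance gives $f_Q(\partial SF_i(Q))=\partial SF_i(P)$, a curve anchored at $P$, and the time reversal, which does not change point sets, cannot ``relocate the base point'' to $f_Q(P)$, which lies outside $C(P)$ when $Q$ is inside.

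Your mechanism for the index swap is also wrong on its own terms, and inconsistent with your item 4. A straight segment traversed in reverse with $\nu<0$ keeps the same vehicle heading, hence the same bearing profile $\beta$ at every point; reversal therefore does \emph{not} exchange the constraints~\eqref{eq:S_R} and~\eqref{eq:S_L}. If it did, $F_Q$ would map $E_1$ arcs into $E_2$-type arcs, contradicting item 4 and the paper's own example $T_2^{R+}\mapsto T_2^{R-}$ (same subscript). In the correct argument item 4 holds because the handedness flip of the inversion cancels the handedness flip of the reflection (a spiral $\rho=\rho_0 e^{k\psi}$ maps to $r=(\rho_P^2/\rho_0)e^{k\theta}$, same characteristic angle and handedness), while the $1\leftrightarrow 2$ swap in items 2--3 emerges from the interplay of the reflection with the forward/backward exchange and must be confirmed on the explicit arc and chord parameterizations of Remarks~\ref{rem:FirstCaseEye} and~\ref{rem:SecondCaseEye}. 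For reference, the paper itself gives no argument here but defers to~\cite{SFPB-TRO09}, where the proof proceeds exactly via this inversion-plus-symmetry property of $Q\mapsto f_Q(P)$; to repair your proposal you should replace the similarity-equivariance core by that inversion argument and then carry out the per-case endpoint and index bookkeeping you sketched, which under the correct map is no longer optional confirmation but the step that actually fixes the indices.
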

\begin{proof}
The proof of this proposition can be found in~\cite{SFPB-TRO09}.
\end{proof}
Based on Proposition~\ref{prop:borders}, the optimal synthesis of the entire motion plane is reported in fig.~\ref{fig:CompleteGuercio}.
\begin{figure}[t!]
\centering
\includegraphics[width=0.85\columnwidth,angle=90]{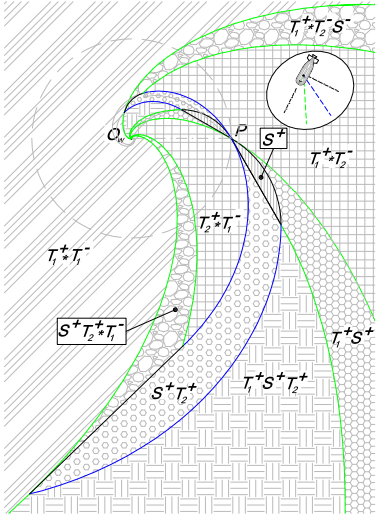}
\caption{Partition of the motion plane for $\frac{\delta}{2}<\Gamma< \frac{\pi-\delta}{2}$ .}
\label{fig:CompleteGuercio}
\end{figure}

\section{Optimal synthesis for generic $\Gamma$}


We first obtain the synthesis of the Borderline Frontal case, i.e. $\Gamma = \frac{\delta}{2}$, reported in fig.~\ref{fig:CompleteForwardDir} from the one obtained in the previous section.
\begin{figure}[t!]
\centering
\includegraphics[width=0.8\columnwidth,angle=90]{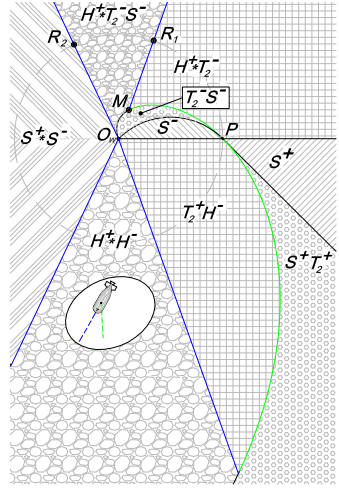}
\caption{Partition of the motion plane for  $\Gamma=\delta/2$ (i.e. a SR border is aligned with the robot motion direction, Borderline Frontal).}
\label{fig:CompleteForwardDir}
\end{figure}

Notice that, $E_1=T_1^R$ of the Side case degenerates in a straight line $H$ through $O_W$ for $\Gamma = \frac{\delta}{2}$. Indeed, referring to fig.~\ref{fig:SubdivisionInCP}, points $M_F$ and $P_F$ degenerate on $O_W$. As a consequence, Region IV, IV and $VI^\prime$ while coordinates $\Psi_{R_1}$ and $\Psi_{R_2}$ of points $R_1$ and $R_2$ can be obtained from values in~\ref{theo:optimalCS} replacing $\phi_1 = 0$.

In the Frontal case, $E_1=H$ becomes a spiral $T_1^L$, straight lines from $P$ and $R_2$ split in straight line and a spiral arc generating the partition reported in fig.~\ref{fig:CompleteAsimm}. In this case, $\phi_1<0$ and points $R_1$ and $R_2$ do not lay on $C(P)$ but on a circle  through $P$ with center $(0,\,-\rho_P\frac{\sin^2\phi_1-\sin^2\phi_2}{2\sin\xi\sin\phi_1\sin\phi_2})$, where $\xi = \frac{t_1+t_2}{t_1\,t_2}\ln\left(\frac{\cos\phi_1+\cos\phi_2}{\sin(\phi_2-\phi_1)}\right)+ \frac{1}{t_1}\ln\left(-\sin\phi_1\right)-\frac{1}{t_2}\ln\left(\sin\phi_2\right)$. Notice that for $\phi_2 = -\phi_1$, this circle  coincide with $C(P)$ and the synthesis proposed in~\cite{SFPB-TRO09} is obtained.

\begin{figure}[t!]
\centering
\includegraphics[width=0.7\columnwidth]{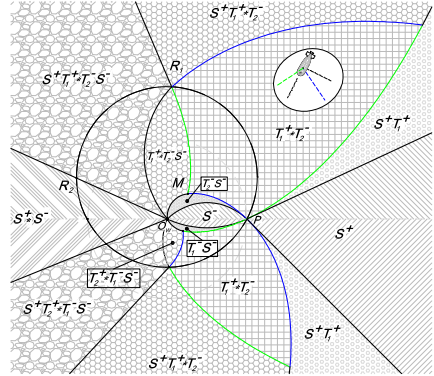}
\caption{Partition of the motion plane for $0\leq \Gamma< \frac{\delta}{2}$, i.e. Frontal case.}
\label{fig:CompleteAsimm}
\end{figure}

Referring again to fig.~\ref{fig:SubdivisionInCP}, in the Borderline Side case ($\Gamma= \frac{\pi-\delta}{2}$, i.e. the SR border is aligned with the axle direction and $\phi_2=\frac{\pi}{2}$), $E_2=T_2^R$ degenerates in $E_2=C$. Points $R_1\equiv M$ and $R_2$ lays on $C(P)$ with $\Psi_{R_1} = \frac{1+\sin\phi_1}{\cos\phi_1}$ and $\Psi_{R_2} = \frac{\pi}{2}-\phi_1+\Psi_{R_1}+\tan\phi_1\ln(\sin\phi_1)$. The obtained synthesis is reported in fig.~\ref{fig:CompleteNonholDir}.
\begin{figure}[t!]
\centering
\includegraphics[width=0.7\columnwidth,angle=90]{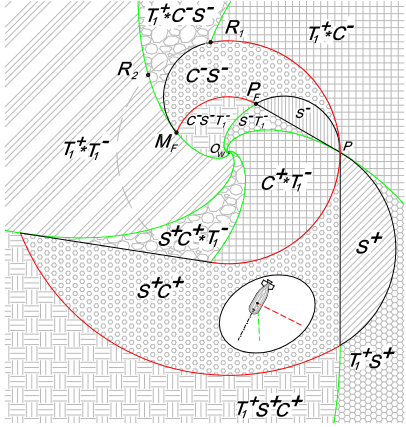}
\caption{Partition of the motion plane for $\Gamma= \frac{\pi-\delta}{2}$ (i.e. a SR border is aligned with the axle direction).}
\label{fig:CompleteNonholDir}
\end{figure}
For the Lateral case $E_2=C$ becomes $E_2=T_2^L$ and the synthesis of the Lateral case, reported in fig.~\ref{fig:CompleteNonholDirAsim}, can be obtained from the one in fig.~\ref{fig:CompleteNonholDir}.
\begin{figure}[t!]
\centering
\includegraphics[width=0.6\columnwidth]{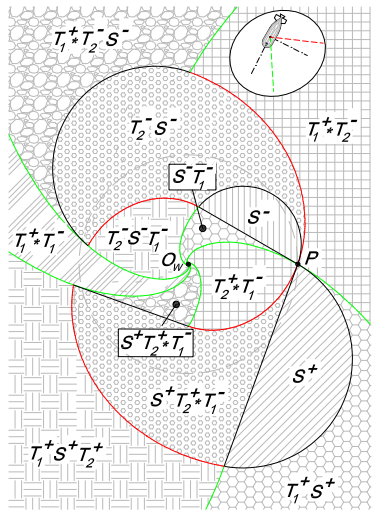}
\caption{Partition of the motion plane for $\frac{\pi-\delta}{2}\leq\Gamma<\frac{\pi}{2}$ (i.e. axle direction is included inside the SR).}
\label{fig:CompleteNonholDirAsim}
\end{figure}

The subdivision of the motion plane in case of $\frac{\pi}{2}<\Gamma\leq\pi$ can be easy obtained by using that one for $0\leq\Gamma\leq\frac{\pi}{2}$ considering optimal path followed in reverse order, i.e. forward arc in backward arc and viceversa. Finally, a symmetry w.r.t. $X_W$ axis of each subdivision of the motion plane for each $\Gamma\in[0,\,\pi]$ allows to obtain the corresponding subdivision for $\Gamma\in[-\pi,\,0]$.

\section{Conclusions and future work}

A complete characterization of shortest paths for unicycle
nonholonomic mobile robots equipped with a limited range side sensor systems has been proposed. A finite sufficient family of optimal paths has been determined based on geometrical properties of the considered problem. Finally, a complete shortest path synthesis to reach a point keeping a feature in sight has been provided.
A possible extension of this work is to consider a bounded 3D SR pointing to any direction with respect to the direction of motion.
A more challenging extension would be considering a different minimization problem such as the minimum time.

\bibliographystyle{plain}
\bibliography{visionbiblio,Fontanelli,cite,OptimalControl}

\appendix
\subsection{Proof of Theorem~\ref{theo:InNOTOut}}
\label{proof:Th1}
\setcounter{theorem}{0}

\begin{theorem}
Given two points $A=(\rho_A,\,\psi_A)$ and $B=(\rho_B,\,\psi_B)$, with $\psi_A>\psi_B$ and $\rho=\rho_A=\rho_B$, and an extremal path $\gamma$ from $A$ to $B$ such that for each point $G$ of $\gamma$, $\rho_G>\rho$, there exists an extremal path $\tilde\gamma$ from $A$ to $B$ such that for each point $\tilde G$ of $\tilde\gamma$, $\rho_{\tilde G}<\rho$ and $\ell(\tilde\gamma)<\ell(\gamma)$.
\end{theorem}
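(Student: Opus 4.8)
My plan is to reduce the statement to a one-parameter comparison of radial excursions and then to exploit the rigid arclength structure of extremal arcs, using the length-scaling property of the transform (Proposition~\ref{prop:length}) as the formal length tool. First I would record the only metric facts the argument really needs. Along a logarithmic spiral $T_i$, the tangent makes the constant angle $\phi_i$ with the radial direction, so $d\rho=\cos\phi_i\,ds$ and the arclength between two radii $r<r'$ equals $(r'-r)/\cos\phi_i$: it depends \emph{only} on the radial extent, not on the angular placement of the arc. Along a straight arc $S$ the radius is a convex function of arclength with a unique interior minimum, and on a rotation $*$ the length is zero. Since $\gamma$ leaves $A$ and returns to $B$ with $\rho_A=\rho_B=\rho$ while staying in $\{\rho_G>\rho\}$, continuity forces $\rho_G$ to attain an interior maximum $\rho_Z>\rho$ at some point $Z$, necessarily a switching point between a radius-increasing and a radius-decreasing arc. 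I would split $\gamma=\gamma_1\gamma_2$ at $Z$, with $\gamma_1:A\to Z$ climbing and $\gamma_2:Z\to B$ descending.

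Next I would build $\tilde\gamma$ as the inward counterpart of $\gamma$ joining the \emph{same} endpoints $A$ and $B$. The guiding idea is that the radial map $\sigma(\rho_G,\psi_G)=(\rho^2/\rho_G,\psi_G)$ fixes $C_\rho$ pointwise, hence fixes $A$ and $B$, sends the outside to the inside, and carries the apex radius $\rho_Z$ to the innermost radius $\rho^2/\rho_Z$. On the spiral pieces this produces exactly the behaviour I want: a spiral arc climbing from $\rho$ to $\rho_Z$ is replaced by a spiral arc descending from $\rho$ to $\rho^2/\rho_Z$, whose length is $(\rho-\rho^2/\rho_Z)/\cos\phi_i=(\rho/\rho_Z)\,(\rho_Z-\rho)/\cos\phi_i$, i.e. strictly shorter by the factor $\rho/\rho_Z<1$. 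Because every non-apex point of $\gamma$ has radius strictly between $\rho$ and $\rho_Z$, each image point has radius strictly below $\rho$, so $\tilde\gamma$ lies in the open disk except at the fixed endpoints. The zero-length rotations $*$ are then used to reconcile the headings at the interior switching point, in the spirit of the example $T_2^{R-}S^-*T_1^{R+}\mapsto T_1^{R+}*T_1^{R+}S^-$ of fig.~\ref{fig:teorema1}.

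For the length comparison I would sum the contributions. Each spiral piece shrinks by the common factor $\rho/\rho_Z$, which is precisely the ratio appearing in Proposition~\ref{prop:length}; the straight piece, repositioned closer to $O_W$, also contributes strictly less because its radial profile is dominated by that of the outward original. Hence $\ell(\tilde\gamma)<\ell(\gamma)$, with strictness inherited entirely from $\rho_Z>\rho$, i.e. from the hypothesis that $\gamma$ genuinely bulges outside $C_\rho$. Conceptually, $\gamma$ pays for climbing from $\rho$ up to $\rho_Z$ and back, whereas $\tilde\gamma$ only pays for descending from $\rho$ down to $\rho^2/\rho_Z$ and back, and the spiral formula $(r'-r)/\cos\phi_i$ converts the smaller excursion directly into smaller length.

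The main obstacle I anticipate is \emph{not} the inequality but the admissibility of the inward word. A single origin-centred similarity cannot fix both $A$ and $B$, and the radial map $\sigma$ flips a spiral of characteristic angle $\phi_i$ into one of angle $-\phi_i$ (changing handedness) while turning straight arcs into circular arcs through $O_W$, which are not extremals; moreover the depicted replacement changes $T_2$ into $T_1$, showing that the inward path must be re-solved within the alphabet rather than obtained by a naive mirror. I would therefore carry out the construction arc by arc, proving that the pieces meet at a common interior point with compatible tangents, that the resulting concatenation is a \emph{feasible} extremal word honouring the constraints \eqref{eq:S_R}--\eqref{eq:S_L}, and that it exists for every configuration of $\Gamma$ and $\delta$. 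The generic case of two spirals $T_1,T_2$ (Side/Lateral) would be treated first, and the degenerate arcs — the half-line $H$ when $\phi_1=0$ and the circle $C$ when $\phi_2=\pi/2$ — would be recovered as limits, checking that neither the strict interiority nor the strict length decrease is lost; this admissibility bookkeeping is the delicate part, after which the inequality is immediate.
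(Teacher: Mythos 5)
You have correctly isolated the right skeleton --- split $\gamma$ at the apex $Z$ where $\rho_Z=\max_{G\in\gamma}\rho_G>\rho$, aim for the contraction ratio $\rho/\rho_Z<1$, and observe that a single origin-centred similarity cannot fix both $A$ and $B$ --- but the construction you actually propose does not close, and you flag this yourself. The radial inversion $\sigma(\rho_G,\psi_G)=(\rho^2/\rho_G,\psi_G)$ destroys the alphabet: straight arcs go to circular arcs through $O_W$ (not extremals), and spirals flip handedness into characteristic angle $-\phi_i$, generally outside $\mathcal{A}$. You then fall back on ``re-solving within the alphabet,'' deferring the ``admissibility bookkeeping'' --- but that deferred step \emph{is} the theorem: you never exhibit the inward extremal word, never verify that its pieces meet at a common interior point, and your length estimate for the straight pieces (``radial profile dominated by the outward original'') is unsubstantiated, since inversion is not a similarity, distorts length non-uniformly, and its image of $S$ is not even a competitor path. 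Once inversion is abandoned, the innermost radius $\rho^2/\rho_Z$ and the per-piece factor $\rho/\rho_Z$ become an ansatz rather than a consequence, so the inequality is \emph{not} ``immediate after admissibility.''

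The paper's proof dissolves all of this with one move you ruled out too quickly: instead of one map fixing both endpoints, use \emph{two} similarities with the same contraction $\rho/\rho_Z$ but different rotation angles. Apply to the sub-path $\gamma_1$ from $Z$ to $B$ the rotation-plus-scaling taking $Z\mapsto A$, and to the sub-path $\gamma_2$ from $Z$ to $A$ the one taking $Z\mapsto B$; both images end at the common point $\tilde Z=(\rho^2/\rho_Z,\,\psi_A+\psi_B-\psi_Z)$, so concatenating them (one traversed in reverse, with zero-length rotations $*$ at the junctions, as in the example of fig.~\ref{fig:teorema1}) yields a path from $A$ to $B$ through $\tilde Z$. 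Because rotation and scaling about $O_W$ preserve every extremal type and the constraints \eqref{eq:S_R}--\eqref{eq:S_L} (only the traversal direction $+/-$ swaps), feasibility is automatic for all $\Gamma$ and $\delta$ --- no arc-by-arc case analysis, no separate limits for $H$ or $C$ --- and every piece, straight arcs included, is contracted by exactly the factor $\rho/\rho_Z$, giving $\ell(\tilde\gamma)=(\rho/\rho_Z)\,\ell(\gamma)<\ell(\gamma)$ and $\rho_{\tilde G}=\rho\rho_G/\rho_Z<\rho$ simultaneously. Your one-parameter spiral computations are consistent with this (the uniform factor $\rho/\rho_Z$ you derive on spiral pieces is exactly the paper's global contraction), but without the two-similarity device the proposal has a genuine gap at its core.
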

\begin{proof}
Consider a point $Z=(\rho_Z,\,\psi_Z)$ such that $\rho_{Z}=\max_{G\in \gamma}{\rho_G}>\rho$. Let $\gamma_1$ and $\gamma_2$ the sub--paths of $\gamma$ from $Z$ to $B$ and from $Z$ to $A$.

The sub--path $\gamma_1$, is rotated and scaled (contracted of factor $\frac{\rho}{\rho_Z}< 1$) such that $Z$ is transformed in $A$ obtaining a path $\tilde\gamma_1$ from $A$ to $\tilde Z =(\frac{\rho^2}{\rho_Z},\,\psi_A+\psi_B-\psi_Z)$. Similarly, $\gamma_2$, can be rotated and scaled with the same scale factor but different rotation angle w.r.t. $\gamma_1$ such that $Z$ is transformed in $B$, see fig.~\ref{fig:teorema1}. After geometrical considerations, it is easy to notice that the obtained path $\tilde\gamma_2$ starts in $B$ and ends in $\tilde Z$.

The obtained paths are a contraction of $\gamma_1$ and $\gamma_2$ respectively and hence shorter. Moreover, any point $G$ of $\gamma_1$ or $\gamma_2$ has $\rho_G>\rho$ hence is scaled in $\tilde G$ of $\tilde\gamma_1$ or $\tilde\gamma_2$ with $\rho_{\tilde G} = \frac{\rho\rho_G}{\rho_Z}<\rho$.

Concluding, we have obtained a shorter path from $A$ to $B$ that evolves completely in the disk of radius $\rho$.
\end{proof}

\subsection{Proof of Theorem~\ref{theo:rho_nomax}}
\label{prof:Th2}
\setcounter{theorem}{1}

\begin{theorem}
Any path consisting in a sequence of a backward extremal arc followed by a forward extremal arc is not optimal.
\end{theorem}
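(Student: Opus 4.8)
The plan is to reduce the statement to Theorem~\ref{theo:InNOTOut} by exhibiting, on any backward-then-forward concatenation, a pair of points $A,B$ of equal radius that straddle an outward bulge of the path. Write $\gamma=\gamma_1\gamma_2$ with $\gamma_1$ a backward extremal arc ending at the reversal point $J$ and $\gamma_2$ a forward extremal arc starting at $J$ (if a rotation on the spot is interposed, $J$ keeps a fixed position while $\beta$ jumps, which only inserts a zero-length plateau and does not affect the argument). The whole point is to understand the radial profile $\rho(s)$ of $\gamma$ near $J$.

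First I would compute the derivative of $\rho$ with respect to arc length. From \eqref{eq:PolarVehicle}, $\dot\rho=-\nu\cos\beta$, and since the centre moves with speed $|\nu|$ we get $\frac{d\rho}{ds}=-\operatorname{sign}(\nu)\cos\beta$. Hence on the backward arc (approaching $J$) $\frac{d\rho}{ds}=\cos\beta$, while on the forward arc (leaving $J$) $\frac{d\rho}{ds}=-\cos\beta$, so the sign of $\frac{d\rho}{ds}$ flips across $J$. In the Frontal, Side and Borderline Side cases one has $\phi_1\le\beta\le\phi_2$ with $\phi_2\le\pi/2$, so $\cos\beta>0$ along both arcs. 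Therefore $\rho$ is strictly increasing as $\gamma_1$ reaches $J$ and strictly decreasing as $\gamma_2$ leaves it: $J$ is a strict local maximum of $\rho$, i.e.\ an outward bulge.

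Next I would extract the points. Because $\rho$ is strictly monotone on each side of $J$ near the reversal, for every value $\rho<\rho_J$ close enough to $\rho_J$ there is a unique $A\in\gamma_1$ and a unique $B\in\gamma_2$ with $\rho_A=\rho_B=\rho$, and every interior point $G$ of the sub-path $A\to J\to B$ satisfies $\rho_G>\rho$. Using $\dot\psi=\nu\sin\beta/\rho$ with $\sin\beta>0$ to track the angular coordinate, I would verify the orientation and, relabelling or using the mirror statement if necessary, arrange $\psi_A>\psi_B$, so that $(A,B)$ together with the (extremal) sub-path between them meets exactly the hypotheses of Theorem~\ref{theo:InNOTOut}. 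Applying that theorem yields an extremal $\tilde\gamma$ from $A$ to $B$, strictly shorter than the sub-path and lying strictly inside radius $\rho$; splicing $\tilde\gamma$ back in (inserting a zero-length rotation at $A$ and $B$ to reconcile headings) produces a feasible path strictly shorter than $\gamma$, so $\gamma$ is not optimal.

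The main obstacle is establishing that the reversal is genuinely an \emph{outward} bulge, which hinges on $\cos\beta>0$ at the junction. This is automatic whenever $\phi_2\le\pi/2$ (Frontal, Side, Borderline Side), but in the Lateral case $\phi_2>\pi/2$, so a reversal occurring at $\beta>\pi/2$ would make $J$ a local \emph{minimum} of $\rho$ (an inward dip), to which Theorem~\ref{theo:InNOTOut} does not apply. I would handle this either by ruling out such reversals among \emph{extremal} concatenations (a simultaneous velocity reversal together with an active-constraint configuration at $\beta>\pi/2$ should be incompatible with the switching and costate-continuity conditions) or, more cleanly, by deferring the Lateral case to the global transformation (reversal of traversal order combined with the $X_W$-axis symmetry) that the synthesis already uses to derive the Lateral partition from the Side and Borderline Side ones. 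A secondary technical point to settle carefully is the orientation condition $\psi_A>\psi_B$, and the degenerate sub-case where $\beta=\pi/2$ (arc $C$) makes $\rho$ locally constant so that the bulge degenerates.
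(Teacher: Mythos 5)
Your core strategy is exactly the paper's: radial monotonicity along extremal arcs makes the reversal point an outward bulge, continuity supplies equal-radius points $A$ and $B$ straddling it, and Theorem~\ref{theo:InNOTOut} produces a strictly shorter extremal replacement. For the Frontal and Side cases your argument is complete, and your explicit computation $\frac{d\rho}{ds}=-\operatorname{sign}(\nu)\cos\beta$ is a cleaner justification of the monotonicity than the paper's bare assertion. The genuine gap is the degenerate case $E_2=C$ (Borderline Side, $\beta=\pi/2$), which you flag as ``a secondary technical point'' but never resolve --- and it cannot be deferred: along $C$ the radius is constant, there is no bulge, and Theorem~\ref{theo:InNOTOut} simply does not apply, yet sequences such as $C^-*S^+$ still have to be excluded for the theorem to hold as stated. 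The paper closes this with a separate elementary comparison you are missing: any sequence of an $S$ (or $E_1$) arc of length $\ell$ and a $C$ arc, in any order and direction, is inscribed between two circumferences centered at $O_W$, and among all such sequences joining the same endpoints the shortest runs the $C$ arc along the smaller circle, necessarily preceded by a \emph{forward} $S$ (or $E_1$) of the same length $\ell$; hence the backward-then-forward ordering is never the cheapest. You need this (or an equivalent direct comparison) to finish.

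On the Lateral case you have correctly identified a real difficulty --- one the paper's own proof silently glosses over: for $\phi_2>\pi/2$ the blanket claim that forward arcs decrease $\rho$ and backward arcs increase it fails along $E_2=T_2^L$ and along $S$ arcs whose $\beta$ crosses $\pi/2$ (the paper itself observes, apropos fig.~\ref{fig:LateralCaseEye}, that $SF(G)$ leaves the disk of radius $\rho_G$), so a concatenation such as $T_2^{L-}*T_1^{R+}$ has $\rho$ strictly decreasing through the junction and is untouched by the bulge argument. However, neither of your proposed repairs is carried out, and the one you call cleaner would fail as stated: reversal of traversal order maps $\Gamma\in[0,\frac{\pi}{2}]$ to $\Gamma\in[\frac{\pi}{2},\pi]$, and the $X_W$-axis symmetry maps $\Gamma$ to $-\Gamma$, so neither transformation carries the Lateral range $\frac{\pi-\delta}{2}<\Gamma\le\frac{\pi}{2}$ into the already-settled Side or Borderline Side ranges; the paper obtains the Lateral \emph{partition} from fig.~\ref{fig:CompleteNonholDir} by a continuity/deformation observation on the synthesis, not by a symmetry that could transport this non-optimality proof. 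So for the Lateral case your proof, like the paper's, remains incomplete, and the costate/switching analysis you merely allude to would have to be done in earnest; you deserve credit, though, for making explicit a hypothesis ($\cos\beta>0$ at the junction) that the published proof uses without acknowledgment.
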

\begin{proof}
Observe that the distance from $O_W$ is strictly increasing along backward extremal arcs (i.e. $S^-$, $E^-_1$, $E^-_2$ with $E_2\neq C$) and strictly
decreasing along forward extremal arcs (i.e. $S^+$,  $E^+_1$, $E^+_2$ with $E_2\neq C$).
For continuity of paths, for any sequence of a backward extremal followed by a
forward one, there exist points $A$ and $B$ that verify hypothesis of
Theorem~\ref{theo:InNOTOut}, hence it is not optimal.

Any sequence consisting in an extremal $S$ (or $E_1$) of length $\ell$ and an extremal $E_2=C$ (in any order and direction) is inscribed in two circumferences centered in $O_W$.
Hence, the shortest sequence is the one with $E_2=C$ along the circle  of smaller radius necessarily preceded by a forward $S$ (or $E_1$) of same length $\ell$.

Concluding, in an optimal path a forward arc cannot follow a backward arc.
\end{proof}

\end{document}